%% file: main.tex
\documentclass{article}

\usepackage{microtype}
\usepackage{graphicx}
\usepackage{subcaption}
\usepackage{booktabs} 

\usepackage{hyperref}

\usepackage[preprint]{icml2026}

\usepackage{amsmath}
\usepackage{amssymb}
\usepackage{mathtools}
\usepackage{amsthm}

\usepackage{comment}
\usepackage{inconsolata}
\usepackage{booktabs}
\usepackage{colortbl}
\usepackage[table]{xcolor}
\usepackage{tabularx}
\usepackage{multirow}
\usepackage{graphicx}
\usepackage{subcaption}
\usepackage{comment}

\ifodd 1

\newcommand{\revw}[1]{\textcolor{red}{#1}}

\usepackage[capitalize,noabbrev]{cleveref}

\theoremstyle{plain}
\newtheorem{theorem}{Theorem}[section]
\newtheorem{proposition}[theorem]{Proposition}
\newtheorem{lemma}[theorem]{Lemma}

\theoremstyle{definition}
\newtheorem{definition}[theorem]{Definition}
\newtheorem{assumption}[theorem]{Assumption}
\theoremstyle{remark}

\def\omegab{\boldsymbol{\omega}}
\def\L{\mathcal{L}}
\def\z{\boldsymbol{z}}
\def\v{\boldsymbol{v}}
\def\m{\boldsymbol{m}}
\def\B{\mathcal{B}}
\def\g{\boldsymbol{g}}
\def\E{\mathbb{E}}
\def\G{\mathcal{G}}

\usepackage[textsize=tiny]{todonotes}

\icmltitlerunning{CurvZO: Adaptive Curvature-Guided Sparse ZO Fine-Tuning for LLMs}

\begin{document}

\twocolumn[
  \icmltitle{CurvZO: Adaptive Curvature-Guided Sparse Zeroth-Order \\ Optimization for Efficient LLM Fine-Tuning}



  \icmlsetsymbol{equal}{*}

  \begin{icmlauthorlist}
    \icmlauthor{Shuo Wang}{1}
    \icmlauthor{Ziyu Chen}{2}
    \icmlauthor{Ming Tang}{1}
  \end{icmlauthorlist}

  \icmlaffiliation{1}{Department of Computer Science and Engineering, Southern University of Science and Technology, Shenzhen, China.}
  \icmlaffiliation{2}{Department of Mathematics, Southern University of Science and Technology, Shenzhen, China.}

  \icmlcorrespondingauthor{Ming Tang}{tangm3@sustech.edu.cn}

  \icmlkeywords{Machine Learning, ICML}

  \vskip 0.3in
]



\printAffiliationsAndNotice{}  

\begin{abstract}
Fine-tuning large language models (LLMs) with backpropagation achieves high performance but incurs substantial memory overhead, limiting scalability on resource-constrained hardware. Zeroth-order (ZO) optimization provides a memory-efficient alternative by relying solely on forward passes, yet it typically suffers from slow or unstable convergence due to high-variance gradient estimates. Sparse ZO updates partially address this issue by perturbing only a subset of parameters, but their effectiveness hinges on selecting informative parameters, which is challenging in ZO optimization because each query yields only scalar feedback. 
We propose \textbf{Adaptive Curvature-Guided Sparse Zeroth-Order Optimization (CurvZO)}, which tracks curvature signals online from scalar ZO feedback and leverages these signals to construct a parameter-wise sampling distribution for selecting coordinates at each update, reducing the variance of the sparse ZO gradient estimator. Moreover, CurvZO dynamically adapts the perturbation budget to the evolving curvature signal distribution, yielding sparse ZO updates that remain both focused and sufficiently exploratory.
Extensive experiments on OPT and Llama across diverse NLP tasks show that CurvZO consistently improves fine-tuning performance and reduces training time over ZO baselines. It improves accuracy by up to 4.4 points and achieves up to a $2\times$ speedup, while preserving memory efficiency.
\end{abstract}

\section{Introduction}
Fine-tuning large language models (LLMs) via backpropagation is a widely adopted strategy that achieves strong performance across a variety of tasks. However, it imposes significant memory overhead due to the large number of parameters, which limits its scalability on resource-constrained hardware. As model sizes continue to grow rapidly, often by orders of magnitude every few years, the gap between memory requirements and available hardware capacity becomes increasingly pronounced \cite{hoffmann2022empirical, kaplan2020scaling}. This trend intensifies the so-called memory wall \cite{gholami2024ai} and poses challenges for both training and deployment \cite{zeng2024flightllm, hur2023fast}.

To address these limitations, zeroth-order (ZO) optimization has emerged as a memory-efficient alternative for fine-tuning LLMs \cite{tan2025harmony, liu2024sparse, zhao2024second, zhang2024revisiting}. ZO methods estimate gradients using only forward passes, avoiding backpropagation and largely eliminating the memory cost of storing activations, gradients, and optimizer states. Among them, MeZO \cite{malladi2023fine} further reduces memory by re-generating perturbations from a saved random seed, so training memory becomes close to inference, yielding up to $12\times$ savings in practice.


\begin{figure}[t]
    \centering
    \includegraphics[width=0.8\linewidth]{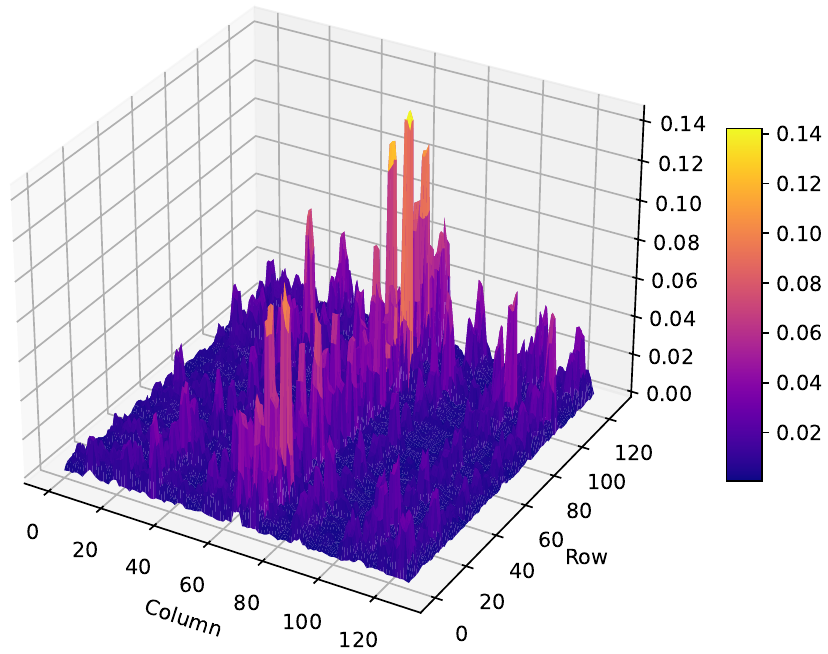}
    \caption{Visualization of anisotropic local curvature in the attention output weights of OPT-6.7B. The $x$- and $y$-axes index the columns and rows of the weight matrix, while the $z$-axis shows curvature magnitude approximated via the diagonal Fisher information (a standard local-curvature surrogate in neural networks).}
    \label{fig:curvature}
    \vspace{-4mm}
\end{figure}

However, ZO optimization often suffers from unstable convergence and lower accuracy compared to first-order (FO) methods that rely on backpropagation for gradient computation, largely due to noise in ZO gradient estimates \cite{liu2024sparse}.
Prior studies have indicated that this noise scales with the dimensionality of the optimized parameters, leading to further degradation as dimensionality increases \cite{liu2024sparse, yu2025zeroth, guozeroth}. Although prior works attempt to mitigate this issue by reducing the number of parameters involved in ZO fine-tuning, they do not provide a principled way to identify which subset of parameters should be perturbed. This limitation arises because the accessible information in the ZO setting is fundamentally restricted to scalar responses from random perturbations. To address this, existing methods incorporate additional information or pre-computed statistics. For instance, SensZOQ \cite{guozeroth} constructs a sparsity pattern using the empirical Fisher information matrix, computed offline before fine-tuning. Such reliance on additional information or pre-computed statistics introduces non-negligible computational overhead, thereby undermining the primary advantage of ZO methods: memory efficiency and simplicity.

This naturally leads to the following question:
\textit{Can we design a plug-and-play mechanism that selectively allocates perturbations to the most informative parameters, thereby improving ZO fine-tuning without relying on pre-computed statistics or changes to the existing framework?}

Recent studies have shown that the loss landscape of LLMs can exhibit highly anisotropic curvature \cite{zhao2024second}. In particular, local curvature magnitudes vary substantially across parameters (Figure~\ref{fig:curvature}), providing a proxy for local sensitivity.
This observation suggests that an effective ZO optimization algorithm could leverage curvature information to prioritize informative parameter subsets and guide perturbations toward higher-curvature directions. However, achieving this remains challenging in the ZO setting, where each query yields only a scalar response, and neither gradients nor curvature can be directly accessed.


In this work, we propose \textbf{Adaptive Curvature-Guided Sparse Zeroth-Order Optimization (CurvZO)}, a framework that tracks curvature signals online from scalar responses and leverages them to construct a sampling distribution for sparse perturbations. CurvZO reduces the variance of the sparse ZO gradient estimator by perturbing parameters with larger local curvature signals more frequently. Moreover, we develop an adaptive budget mechanism that dynamically adjusts the number of perturbed parameters according to the evolving curvature signal distribution, yielding sparse ZO updates that remain both focused and sufficiently exploratory.

The summary of our contributions is as follows:
\vspace{-1.5mm}
\begin{itemize}
    \item \textbf{Online Curvature Signal Tracking.}
    We propose a curvature score to track curvature signals online from the scalar responses of ZO updates, which captures the local geometry of the loss landscape and provides guidance for selective perturbations in sparse ZO.


    \item \textbf{Curvature-Guided Sparse ZO Fine-Tuning.}
    Leveraging online curvature scores, we design a curvature-guided sampling rule that assigns higher sampling probabilities to parameters with larger scores. This rule concentrates perturbations on higher-curvature directions, thereby reducing the variance of the sparse ZO gradient estimator.

    \item \textbf{Adaptive Budget Selection.} 
    We introduce an adaptive budget mechanism that adjusts the number of perturbed parameters based on the evolving curvature score distribution, yielding sparse ZO updates that remain both focused and sufficiently exploratory.


    \item \textbf{Empirical Evaluation.}
    We evaluate CurvZO on OPT and Llama models across diverse NLP tasks. CurvZO consistently outperforms ZO baselines and converges faster, improving accuracy by up to 4.4 points and reducing GPU hours by up to $2\times$, while preserving ZO-level memory efficiency.

\end{itemize}

\section{Preliminaries}
\subsection{Zeroth-Order Optimization}
ZO optimization has recently attracted significant attention in machine learning \cite{verma2023certified, wang2022zarts, gu2021efficient}. It estimates gradients using only a few forward passes, without requiring backpropagation. Crucially, ZO eliminates the need to store memory-intensive components required in FO training, such as intermediate activations, backward gradients, and optimizer states, thereby substantially reducing GPU memory consumption.

A classical gradient estimator used in ZO methods is the Simultaneous Perturbation Stochastic Approximation (SPSA) \cite{spall2002multivariate}. Each ZO query yields a scalar response via finite-difference approximation:
\begin{equation}
\Delta = \frac{\L(\omegab + \epsilon \z; \mathcal{B}) - \L(\omegab - \epsilon \z; \mathcal{B})}{2 \epsilon},
\label{eq:response}
\end{equation}
where $\omegab \in \mathbb{R}^d$ denotes the model parameters, and $\L(\omegab; \mathcal{B})$ is the loss evaluated on a minibatch $\mathcal{B}$ of size $B$, uniformly sampled from the training dataset $\mathcal{D}=\{(x_i,y_i)\}_{i=1}^{|\mathcal{D}|}$. Meanwhile, $\z \in \mathbb{R}^d$ is a random perturbation sampled from $\mathcal{N}(\boldsymbol{0},\boldsymbol{I}_d)$, and $\epsilon$ is the perturbation scale.
The ZO gradient estimator is given by
\begin{equation}
\hat{\nabla} \L(\omegab; \mathcal{B}) = \Delta \z.
\label{eq:grad estimator}
\end{equation}

Then, this estimator can be plugged into standard FO optimizers. For example, ZO-SGD updates the parameters as
\begin{equation}
\omegab^{t+1} = \omegab^t - \eta^t \hat{\nabla} \L(\omegab^t; \mathcal{B}),
\end{equation}
where $\eta^t > 0$ is the learning rate at iteration $t$. 

\subsection{Sparsity in ZO}
Existing studies have established that ZO fine-tuning performance degrades as the dimensionality of the optimized parameters increases \cite{liu2024sparse, yu2025zeroth}. To mitigate this, recent works enforce sparsity by perturbing only a subset of parameters per update. Specifically, they construct a random binary mask $\boldsymbol{m}\in\{0,1\}^d$ and apply it to Gaussian noise $\boldsymbol{z} \sim \mathcal{N}(\boldsymbol{0},\boldsymbol{I}_d)$ to obtain a sparse perturbation $\hat{\boldsymbol{z}} = \boldsymbol{m} \odot \boldsymbol{z}$. The gradient is then estimated via SPSA with $\hat{\boldsymbol{z}}$, i.e., by replacing $\boldsymbol{z}$ with $\hat{\boldsymbol{z}}$ in Eqs. \eqref{eq:response} and \eqref{eq:grad estimator}.

Theoretical analyses of ZO with sparse perturbations show that sparsity can accelerate convergence under smoothness assumptions \cite{liu2024sparse, guozeroth}. Motivated by this, Sparse-MeZO \cite{liu2024sparse} restricts perturbations to small-magnitude weights. 
However, the injected perturbation on each parameters is $z_i\sim\mathcal{N}(0,1)$, whose absolute scale is independent of the parameter magnitude. Therefore, perturbing small-magnitude parameters can cause disproportionately large changes. This increases the variance of the resulting ZO gradient estimates, which can result in less reliable and suboptimal update directions. 
In addition, SubZero \cite{yu2025zeroth} achieves sparsity by imposing a low-rank structure on the perturbations. Although both Sparse-MeZO and SubZero recognize that introducing sparsity can improve the efficiency of ZO optimization, they lack an effective mechanism to identify the optimal subset of parameters to perturb. Their sparsity patterns are either predefined or randomly assigned, rather than adaptively determined according to the importance or sensitivity of parameters, which may limit their overall effectiveness.

Moreover, SensZOQ \cite{guozeroth} constructs a static sparsity pattern using empirical Fisher information computed during pre-training. However, this approach relies on pre-computed, gradient-based statistics, which incur non-negligible computational overhead.

\section{CurvZO}
We consider the sparse ZO setting, where each update perturbs only a subset of parameters. Given a limited perturbation budget $B$, a key challenge is to allocate perturbations to the most informative parameters, so that each query yields the largest possible optimization progress. Consequently, the selection of parameters to perturb is central to the efficiency of sparse ZO optimization.

To this end, we introduce a binary mask $\m \in \{0,1\}^d$, where $m_i=1$ indicates that the $i$-th parameter is selected for perturbation and $m_i=0$ otherwise. We generate $\m$ by sampling each entry independently as $m_i \sim \mathrm{Bernoulli}(\pi_i)$, where $\pi_i \in (0,1]$ denotes the sampling probability of the $i$-th parameter. Given $\z \sim \mathcal{N}(\boldsymbol{0},\boldsymbol{I}_d)$, we construct a sparse perturbation direction $\v = \m \odot \z$. 

The challenge is to assign sampling probabilities $\boldsymbol{\pi}=(\pi_1,\ldots,\pi_d)$ over parameter coordinates so as to prioritize the most informative ones.
However, ZO feedback offers little direct guidance for setting these probabilities: each query yields only a scalar response $\Delta$, with no explicit coordinate-wise signal indicating which coordinates are most beneficial to perturb. 
On the other hand, curvature serves as a proxy for local sensitivity, so it is natural to construct $\boldsymbol{\pi}$ so that parameters with higher local curvature receive larger sampling probabilities \cite{zhao2024second}. 
However, curvature is not directly observable in ZO optimization, so constructing $\boldsymbol{\pi}$ from scalar feedback alone is nontrivial.


We introduce \textbf{Adaptive Curvature-Guided Sparse Zeroth-Order Optimization (CurvZO)}, a framework that tracks curvature signals online and uses them to guide sparse ZO updates. At each iteration, CurvZO tracks per-parameter curvature signals from the scalar response $\Delta$. It then uses these signals to construct a sampling distribution $\boldsymbol{\pi}$ over parameters and samples a sparse perturbation mask $\m$ accordingly, so that parameters with larger curvature signals are perturbed more frequently. CurvZO also adapts the perturbation budget $B$ to align the sparsity level with the evolving curvature signal distribution.

\subsection{Online Curvature Signal Tracking}
In LLMs, curvature is often approximated using Fisher information matrix \cite{kirkpatrick2017overcoming}, or more commonly its diagonal.
For models trained with the cross-entropy loss $\mathcal{L}$, the $i$-th diagonal entry is defined as:
\begin{equation}
F_{ii}(\omegab) \! = \! \mathbb{E}_{(x,y)\sim P_{\mathrm{data}}} \!
\left[
\left(
\frac{\partial}{\partial \omega_i}
\mathcal{L}(\omegab;x,y)
\right)^2
\right].
\label{eq:fisher}
\end{equation}
However, in ZO optimization, the only available information per query is the perturbation direction $\v$ and the scalar response $\Delta$, making gradient-based curvature estimation infeasible. Therefore, we track an online curvature signal for Eq. \eqref{eq:fisher} from $\v$ and $\Delta$.

\paragraph{Curvature Score.}
We define a curvature score and use it as the curvature signal, given by
\begin{equation}
s_i \triangleq \Delta^2 v_i^2 .
\end{equation}
To show that $s_i$ tracks a meaningful curvature signal, we take expectation over the perturbation randomness (i.e., $\m$ and $\z$), which yields
\begin{equation}
\begin{aligned}
\! \E_{\m,\z}[s_i] \! = \! \underbrace{\! \pi_i \! \sum_{j=1}^d \pi_j g_j^2}_{\text{shared term}} \!
+ \! \underbrace{\pi_i(3 \! - \! \pi_i)g_i^2 \vphantom{\sum_{j=1}^d}}_{\text{signal term}}  \! + \! \mathcal{O}(\epsilon^2), \!
\label{eq:per curvature}
\end{aligned}
\end{equation}
where $g_i$ is the true gradient of parameter $i$. The derivation is provided in Appendix \ref{sec:app_mixed_moment}. 
In Eq.~\eqref{eq:per curvature}, the signal term contains $g_i^2$, which is widely used as a stochastic proxy for the diagonal of the Fisher information matrix. Therefore, $s_i$ can track a meaningful curvature signal, and we use it to construct the sampling distribution $\boldsymbol{\pi}$ and guide sparse perturbations. While an unbiased variant of $s_i$ can eliminate the shared term in Eq.~\eqref{eq:per curvature} in expectation, it typically incurs much higher variance and can be unstable. Details are provided in Appendix~\ref{sec:app_variance_analysis}.



\paragraph{Stabilizing Curvature Scores.}
The raw curvature score $s_i$ can be noisy due to (i) the high variance of $v_i^2$ induced by Bernoulli sparsity and Gaussian perturbations, and (ii) temporal variability in the ZO response $\Delta$. 
We address these two sources via normalization and temporal smoothing. 

To reduce the variance induced by $v_i^2$, we normalize the per-coordinate contribution by the total perturbation energy $\sum_j v_j^2$. This yields a curvature score that is less sensitive to the absolute scale of individual $v_i^2$, improving practical stability.
Specifically, we define the \emph{normalized curvature score} for parameter $i$ as:
\begin{equation}
\tilde{s}_i  = \frac{v_i^2}{\sum_{j} v_j^2}\,\Delta^2.
\end{equation}
To further mitigate temporal variability, we apply exponential moving average (EMA) \cite{brown1959statistical}, forming the smoothed curvature scores $\mathcal{S}^t=\{S_i^t\}_{i=1}^d$ at iteration $t$ via
\begin{equation}
S_i^t \leftarrow (1 - \beta) S_i^{t-1} + \beta \tilde{s}_i^t,
\label{eq:ema}
\end{equation}
where $\beta \in (0,1]$ controls the degree of smoothing, with smaller values yielding smoother and more stable curvature scores.


\subsection{Curvature-Guided Sparse Zeroth-Order Optimization}\label{sec:csgZO}
We introduce curvature-guided sparse ZO optimization, a framework that uses curvature scores $\mathcal{S}^t$ to construct a sampling distribution $\boldsymbol{\pi}^t = \{\pi_i^t\}_{i=1}^d$ for sparse perturbations. 
Specifically, we first show that the naive gradient estimator $\hat{\g} = \Delta \v$ is biased under Bernoulli-based sparse masking (where $m_i \sim \mathrm{Bernoulli}(\pi_i^t)$ and $\v = \m \odot \z$), and correct it using Horvitz–Thompson reweighting, yielding an unbiased estimator $\tilde g_i=\Delta \frac{v_i}{\pi_i^t}$.
We then derive $\boldsymbol{\pi}^t$ by minimizing an upper bound on the variance of the corrected estimator under a fixed perturbation budget $B$, which yields the curvature-aware sampling rule $\pi_i^t \propto \sqrt{S_i^t}$.

\paragraph{Bias Correction for the Gradient Estimator.}

Given $\z \sim \mathcal{N}(\boldsymbol{0}, \boldsymbol{I}_d)$, we sample a binary mask $\m$ with independent entries $m_i \sim \mathrm{Bernoulli}(\pi_i)$ and form the sparse perturbation direction $\v=\m\odot\z$.

The naive gradient estimator $\hat{\g}\triangleq \Delta\,\v$ is biased. In particular, for each coordinate $i$, we have
\begin{equation}
\begin{aligned}
\! \E_{\m,\z} [\hat{g}_i] \! = \! \E_{\m,\z} [(\g^T (\m \! \odot \! \z))m_i z_i] \!  = \! g_i \pi_i.
\end{aligned}
\end{equation}
Therefore, aggregating across coordinates yields
\begin{equation}
\E_{\m,\z}[\hat{\g}] \!=  \! \E_{\m,\z}[\Delta \v] \! = \! \mathrm{diag}(\boldsymbol{\pi})\,\g \!+\! \mathcal{O}(\epsilon^2).
\end{equation}
Hence, $\hat{\g}$ is unbiased only when $\pi_i=1$ for all $i$; otherwise, each coordinate is attenuated in expectation by its sampling probability $\pi_i$. When $\pi_i$ varies across coordinates, the expected update becomes a coordinate-wise rescaling of the true gradient, which can distort the optimization dynamics (e.g., slowing convergence and complicating standard convergence analyses).

To remove this sampling-induced bias, we adopt the Horvitz–Thompson correction \cite{sarndal2003model}, i.e., an inverse-probability reweighting that rescales the $i$-th component of the gradient estimate by $1/\pi_i$. 
\begin{definition}\label{def:gra estimator}
Given the sparse perturbation $\v = \m \odot \z$ and the two-point finite-difference response
\begin{equation}
\Delta \triangleq \frac{\L(\omegab + \epsilon \v; \B) - \L(\omegab - \epsilon \v; \B)}{2\epsilon}, \qquad \epsilon>0,
\end{equation}
we define the gradient estimator $\tilde{\g}\triangleq(\tilde g_i)_{i=1}^d$ with
\begin{equation}
\tilde g_i \triangleq \Delta \frac{v_i}{\pi_i}.
\label{eq:unbias est}
\end{equation}
\end{definition}

We next formalize the unbiasedness of $\tilde{\g}$.
\begin{proposition}[Unbiasedness]\label{pro:unbias}
The gradient estimator defined in Definition \ref{def:gra estimator} is unbiased, i.e.,
\begin{equation}
\E_{\m,\z}[\tilde{\g}] = \g + \mathcal{O}(\epsilon^2),
\end{equation}
where $\mathcal{O}(\epsilon^2)$ denotes the standard second-order bias in ZO gradient estimation.
\end{proposition}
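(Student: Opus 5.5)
We will argue by Taylor expansion of the finite difference $\Delta$ and then compute the mask and Gaussian moments coordinate by coordinate. Throughout, $\g$ denotes $\nabla \L(\omegab;\B)$, the gradient of the minibatch loss (taking a further expectation over $\B$ turns it into the full gradient). We assume $\L$ is three times differentiable with bounded third derivatives, which is the standard smoothness assumption behind the ``standard second-order bias.''

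\textbf{Step 1 (expanding $\Delta$).} Expand $\L(\omegab \pm \epsilon\v;\B)$ around $\omegab$ to third order. The zeroth- and second-order terms are even in $\epsilon\v$, so they cancel in the symmetric difference. This leaves
\begin{equation*}
\Delta = \g^\top \v + R(\v),
\end{equation*}
where $R(\v) = \frac{\epsilon^2}{6}\nabla^3\L(\xi)[\v,\v,\v]$ averaged over the two intermediate points. Hence $|R(\v)| \le C\epsilon^2 \|\v\|^3$.

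\textbf{Step 2 (the leading term).} Substituting gives
\begin{equation*}
\tilde g_i = \frac{(\g^\top\v)\, v_i}{\pi_i} + \frac{R(\v)\, v_i}{\pi_i}.
\end{equation*}
For the first piece, write
\begin{equation*}
(\g^\top\v)\, v_i = \sum_j g_j m_j z_j\, m_i z_i .
\end{equation*}
The masks $m$ and the Gaussians $z$ are independent of each other, and each family has independent coordinates. For $j\neq i$ we have $\E[z_j z_i]=0$, so those cross terms vanish. The diagonal term gives $g_i\,\E[m_i^2]\,\E[z_i^2] = g_i\pi_i$, using $m_i^2=m_i$. Dividing by $\pi_i$, which is legitimate since $\pi_i\in(0,1]$, yields exactly $g_i$. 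This is the Horvitz--Thompson cancellation of the attenuation $g_i\pi_i$ computed just before Definition~\ref{def:gra estimator}.

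\textbf{Step 3 (the remainder).} We need $\E[R(\v) v_i]/\pi_i = \mathcal{O}(\epsilon^2)$. Bound
\begin{equation*}
|\E[R(\v) v_i]| \le C\epsilon^2\, \E\big[\|\v\|^3 |v_i|\big] \le C\epsilon^2\, \E\big[\|\z\|^3 |z_i|\, m_i\big].
\end{equation*}
Conditioning on $m_i=1$ produces a factor $\pi_i$, which cancels the $1/\pi_i$. What remains is $C\epsilon^2$ times a Gaussian moment that is bounded polynomially in $d$. Alternatively, one can replace $R$ by its exact third-order form $\frac{\epsilon^2}{6}\nabla^3\L(\omegab)[\v,\v,\v]$ plus higher order. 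Then $\E[\nabla^3\L(\omegab)[\v,\v,\v]\,v_i]$ involves fourth moments of $\v$, each of which carries the factor $m_i$ and hence $\pi_i$.

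The main obstacle is this remainder step. The key point is that the $1/\pi_i$ reweighting must not inflate the bias when $\pi_i$ is small, and this is handled by noting that $v_i$ vanishes unless $m_i=1$. The constant hidden in $\mathcal{O}(\epsilon^2)$ depends on $d$ and the third-derivative bound, but not on $\pi_i$, and this is consistent with the statement's ``standard second-order bias.'' Collecting coordinates then gives $\E_{\m,\z}[\tilde{\g}] = \g + \mathcal{O}(\epsilon^2)$.
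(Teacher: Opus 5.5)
Your proposal is correct and follows essentially the same route as the paper. Both Taylor-expand the symmetric difference to get $\Delta = \g^\top\v + \mathcal{O}(\epsilon^2)$, kill the $j\neq i$ terms via $\E[z_j z_i]=0$, and let the $1/\pi_i$ reweighting cancel $\E[m_i^2]=\pi_i$. Your Step 3 is more careful than the paper, which simply asserts $\frac{1}{\pi_i}\E[v_i\cdot\mathcal{O}(\epsilon^2)]=\mathcal{O}(\epsilon^2)$: your bound $\|\v\|^3|v_i|\le m_i\|\z\|^3|z_i|$ makes the remainder pick up a factor $\pi_i$ that cancels the reweighting, which is what keeps the bias constant independent of $\pi_i$.
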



The proof is provided in Appendix~\ref{sec:app proof unbiase}. 
This correction eliminates the sampling bias induced by Bernoulli-based sparse masking, making the gradient estimator unbiased up to the standard $\mathcal{O}(\epsilon^2)$. This unbiasedness is essential for CurvZO to optimize the intended objective and converge to a stationary point.

\paragraph{Variance-Minimizing Sampling Distribution.}
Since the unbiased gradient estimator in Definition~\ref{def:gra estimator} depends on $\boldsymbol{\pi}^t$ via the $1/\pi_i^t$ reweighting, we design $\boldsymbol{\pi}^t$ by minimizing an upper bound on the total variance subject to a sampling budget $B= \sum_{i} \pi_i^t$.

We begin by establishing an upper bound on the variance of the unbiased gradient estimator.
\begin{proposition}[Variance of the Unbiased Gradient Estimator]\label{pro:variance}
For the ZO estimator in Definition \ref{def:gra estimator}, the coordinate-wise variance satisfies
\begin{equation}
\mathrm{Var}(\tilde g_i^t)\ \le\ C\,\frac{F_{ii}}{\pi_i^t},
\label{eq:var-upper-bound}
\end{equation}
for some constant $C>0$, where $F_{ii}$ denotes the $i$-th diagonal entry of the Fisher information matrix.
\end{proposition}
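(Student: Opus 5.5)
We bound the second moment, since $\mathrm{Var}(\tilde g_i^t)\le \E_{\m,\z}[(\tilde g_i^t)^2]$, and then relate the result to $F_{ii}$. First we linearize the finite-difference response. A Taylor expansion of $\L(\omegab\pm\epsilon\v;\B)$ gives $\Delta=\g_\B^\top\v+\mathcal{O}(\epsilon^2\|\v\|^3)$, where $\g_\B=\nabla\L(\omegab;\B)$ is the minibatch gradient. Hence, up to $\mathcal{O}(\epsilon^2)$ terms that we absorb into the constant,
\begin{equation*}
(\tilde g_i^t)^2 \approx \frac{(\g_\B^\top(\m\odot\z))^2\, m_i z_i^2}{(\pi_i^t)^2},
\end{equation*}
using $m_i^2=m_i$.

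Next we take expectations over $\m$ and $\z$ with $\B$ fixed, repeating the mixed-moment computation from Appendix~\ref{sec:app_mixed_moment} behind Eq.~\eqref{eq:per curvature}. Expanding the square, only the diagonal terms $j=k$ survive. For $j\neq i$ they contribute $\pi_i\pi_j g_{\B,j}^2$, and for $j=i$ they contribute $\pi_i\cdot 3g_{\B,i}^2$ because $\E z_i^4=3$. Therefore
\begin{equation*}
\E_{\m,\z}[(\tilde g_i^t)^2] = \frac{1}{\pi_i^t}\Big(3g_{\B,i}^2+\sum_{j\neq i}\pi_j^t g_{\B,j}^2\Big)+\mathcal{O}(\epsilon^2).
\end{equation*}
This already has the desired $1/\pi_i^t$ dependence. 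It remains to show that the bracket is at most a constant multiple of $F_{ii}$.

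The main obstacle is the cross-coordinate term $\sum_{j\ne i}\pi_j g_{\B,j}^2$, which is not a per-coordinate quantity. We handle it in three steps.
\begin{enumerate}
\item Take the expectation over the minibatch as well. Using Eq.~\eqref{eq:fisher}, Jensen gives $\E_\B[g_{\B,j}^2]\le F_{jj}$ when per-example gradients are averaged, so the bracket is bounded by $3F_{ii}+\sum_j\pi_jF_{jj}$.
\item Invoke a bounded-ratio (or bounded-gradient) assumption, $\max_j F_{jj}\le \kappa F_{ii}$ on the support. Together with $\sum_j\pi_j=B$, this gives $\sum_j\pi_jF_{jj}\le B\kappa F_{ii}$.
\item Conclude with $C=3+B\kappa$ plus the absorbed $\mathcal{O}(\epsilon^2)$ slack. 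If no ratio assumption is allowed, the fallback is a uniform bound $\sum_j \pi_j F_{jj}\le B\,\max_j F_{jj}$, treated as a constant relative to $F_{ii}$. This is also the form that matters for the subsequent optimization over $\boldsymbol{\pi}$, since only the $F_{ii}/\pi_i$ structure is used there.
\end{enumerate}

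Subtracting the squared mean $g_i^2\ge 0$ can only decrease the quantity, so the variance inherits the bound. This yields Eq.~\eqref{eq:var-upper-bound}.
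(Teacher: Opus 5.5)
Your derivation follows the paper's proof in Appendix~\ref{sec:app_variance} step for step. Both write $\E[\tilde g_i^2]=\E[\Delta^2v_i^2]/\pi_i^2$, use the mixed-moment identity to get $\frac{1}{\pi_i}(3g_i^2+\sum_{j\ne i}\pi_jg_j^2)$, and then pass to the Fisher diagonal. Your step $\E_\B[g_{\B,j}^2]\le F_{jj}$, which follows from $(\frac{1}{|\B|}\sum_b a_b)^2\le\frac{1}{|\B|}\sum_b a_b^2$, is a cleaner justification than the paper's bare identification $F_{ii}=\E[g_i^2]$.

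The only real divergence is the cross term. The paper writes the bound as $(C_0+3F_{ii})/\pi_i$ with $C_0\approx\sum_j\pi_jF_{jj}$, declares $C_0$ ``coordinate-independent'', and concludes $\mathcal{O}(F_{ii}/\pi_i)$. That is exactly your fallback branch, and it is not a valid inequality: an additive $C_0/\pi_i$ is not bounded by a fixed multiple of $F_{ii}/\pi_i$ when $F_{ii}$ is small. Your bounded-ratio hypothesis $\max_jF_{jj}\le\kappa F_{ii}$ is what actually closes the step, and some hypothesis of this kind is unavoidable. Take a data-independent linear loss with $g_i=0$ and $g_j\neq0$ for some $j\neq i$. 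Then $F_{ii}=0$ and $\Delta=\g^\top\v$ exactly, yet $\mathrm{Var}(\tilde g_i)=\frac{1}{\pi_i}\sum_{k\neq i}\pi_kg_k^2>0$, so no constant $C$ works. Your main branch is therefore a correct proof of the proposition under an added assumption the statement genuinely needs. You should drop the fallback: it yields only $(3F_{ii}+B\max_jF_{jj})/\pi_i$, which is the paper's unproven step rather than the claimed form.

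Two caveats on the constant. First, $C=3+B\kappa$ grows with the budget. This is harmless for Proposition~\ref{pro:sample_dis} at fixed $B$, and your remark that only the $F_{ii}/\pi_i$ structure matters is right for that proposition. But Theorem~\ref{theo:convergence} treats $C$ as $B$-independent to obtain the floor $2CM^2/(3B)$. With your constant that floor is at least $2\kappa M^2/3$ and no longer decays in $B$. This is not an artifact of your argument: under $\pi_i\propto\sqrt{F_{ii}}$ the summed cross terms equal $(\sum_jF_{jj}^{3/2})(\sum_iF_{ii}^{-1/2})-\sum_iF_{ii}$, independent of $B$. Second, the Taylor remainder contributes an additive term of order $\epsilon^2/\pi_i$, with moments of $\|\v\|$ as factors. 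Like the cross term, it cannot be absorbed into $CF_{ii}/\pi_i$ without a lower bound on $F_{ii}$. State the bound with an explicit $\mathcal{O}(\epsilon^2)$ remainder, which is what the paper effectively does by omitting it.
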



This bound suggests constructing $\boldsymbol{\pi}^t$ to minimize the total variance upper bound, yielding the following problem (P1):
\begin{equation}
\begin{aligned}
(\mathrm{P1})\quad 
\boldsymbol{\pi}^{t\star}
\in \arg\min_{\boldsymbol{\pi}^t}\;& \sum_{i=1}^d \frac{F_{ii}}{\pi_i^t}\\
\text{s.t.}\;& \sum_{i=1}^d \pi_i^t = B, \\
&0 \leq \pi_i^t\le 1,\ \forall i.
\end{aligned}
\label{eq:pi_opt}
\end{equation}

\begin{proposition}[Variance-Minimizing Sampling Distribution]\label{pro:sample_dis}
Let $\boldsymbol{\pi}^{t\star}=\{\pi_i^{t\star}\}_{i=1}^d$ be an optimal solution to Problem (P1). It holds that
\begin{equation}
\pi_i^{t\star} \propto \sqrt{F_{ii}}.
\end{equation}
The probabilities are clipped at $1$ to satisfy $\pi_i^t \le 1$.
\end{proposition}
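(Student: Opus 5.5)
We treat (P1) as a convex program and read off its KKT conditions. The objective $\sum_i F_{ii}/\pi_i$ is convex on $\pi_i>0$, since each term $F_{ii}/\pi_i$ with $F_{ii}\ge 0$ is convex there. The feasible set is a polytope (a hyperplane intersected with a box), so it is convex as well. We assume $F_{ii}>0$ for all $i$, so the objective blows up as $\pi_i\to 0^+$; any optimizer therefore has $\pi_i>0$, and the lower constraint $\pi_i\ge 0$ is inactive. Coordinates with $F_{ii}=0$ would be handled separately by a limiting argument, since their optimal allocation is $0$. We also assume $B\le d$ so that the problem is feasible. Slater's condition holds for $0<B<d$, for example at $\pi_i=B/d$, so the KKT conditions are necessary and sufficient for optimality.

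Introduce a multiplier $\lambda$ for $\sum_i\pi_i=B$ and multipliers $\mu_i\ge 0$ for $\pi_i\le 1$. Stationarity gives
\begin{equation*}
-\frac{F_{ii}}{\pi_i^2}+\lambda+\mu_i=0,
\end{equation*}
together with complementary slackness $\mu_i(1-\pi_i)=0$. There are two cases:
\begin{itemize}
\item If $\pi_i<1$, then $\mu_i=0$, hence $\pi_i=\sqrt{F_{ii}/\lambda}$.
\item If $\pi_i=1$, then $F_{ii}=\lambda+\mu_i\ge\lambda$.
\end{itemize}
Combining the two cases yields the clipped water-filling form
\begin{equation*}
\pi_i^{t\star}=\min\Bigl\{1,\ \sqrt{F_{ii}}/\sqrt{\lambda}\Bigr\}.
\end{equation*}
So on the unclipped set the solution is $\pi_i^{t\star}\propto\sqrt{F_{ii}}$ with common constant $1/\sqrt{\lambda}$, and any coordinate whose proportional value would exceed $1$ is clipped to $1$. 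This is exactly the statement.

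It remains to show that a suitable $\lambda$ exists and is unique. Consider
\begin{equation*}
h(\lambda)=\sum_i\min\{1,\sqrt{F_{ii}/\lambda}\}.
\end{equation*}
This function is continuous and nonincreasing in $\lambda>0$. It tends to $d$ as $\lambda\to 0^+$ and to $0$ as $\lambda\to\infty$, so the intermediate value theorem gives some $\lambda$ with $h(\lambda)=B$. Because the objective is strictly convex in the $\pi_i$ with $F_{ii}>0$, the optimal $\boldsymbol{\pi}^{t\star}$ is unique. When no clipping occurs, the constant is explicit: $\pi_i^{t\star}=B\sqrt{F_{ii}}/\sum_j\sqrt{F_{jj}}$.

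As a cross-check for the unclipped case, the Cauchy–Schwarz inequality gives
\begin{equation*}
\Bigl(\sum_i\sqrt{F_{ii}}\Bigr)^2\le\Bigl(\sum_i F_{ii}/\pi_i\Bigr)\Bigl(\sum_i\pi_i\Bigr),
\end{equation*}
with equality exactly when $\pi_i\propto\sqrt{F_{ii}}$. This confirms the optimum directly.

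The main obstacle is the boundary bookkeeping rather than any single computation. It consists of making the clipping rigorous, namely characterizing the active set $\{i:F_{ii}\ge\lambda\}$ and showing that the clipped-and-renormalized vector satisfies the KKT conditions. It also requires handling the degenerate cases $F_{ii}=0$ and $B$ close to $d$.
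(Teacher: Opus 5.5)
Your proposal is correct and uses the same Lagrange-multiplier idea as the paper. The paper's proof only sets the derivative of $\sum_i F_{ii}/\pi_i+\lambda(\sum_i\pi_i-B)$ to zero to get $\pi_i^2=F_{ii}/\lambda$; it ignores the box constraints and never checks that this stationary point is a minimum. Your additions go beyond the paper and actually justify the clipping clause, which the paper asserts without proof: the KKT treatment of $\pi_i\le 1$, the convexity-plus-Slater sufficiency argument, and the intermediate-value choice of $\lambda$. Two small points: you leave implicit that $\lambda>0$, which holds because some coordinate is unclipped when $B<d$. And the step you call the main obstacle is already finished, since $\mu_i=\max\{0,F_{ii}-\lambda\}$ certifies KKT for $\pi_i=\min\{1,\sqrt{F_{ii}/\lambda}\}$ once $\lambda$ solves $h(\lambda)=B$.
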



Recall that in practice, the Fisher diagonals $F_{ii}$ are not observable in ZO training.  We use the curvature scores $\mathcal{S}^t = \{S_i^t\}_{i=1}^d$ and adopt a plug-in rule:
\begin{equation}
    \pi_i^{t\star} \propto \sqrt{S_i^t}.
\end{equation}

Intuitively, the variance bound scales as $F_{ii}/\pi_i$; if a high-curvature coordinate is sampled rarely (small $\pi_i$), its variance term becomes large. Thus, assigning larger $\pi_i$ to coordinates with larger curvature scores reduces the overall estimator variance. The proofs of Propositions~\ref{pro:variance} and~\ref{pro:sample_dis} are provided in Appendices~\ref{sec:app_variance} and~\ref{sec:app var-opt dis}, respectively.

\subsection{Adaptive Budget Selection}

In ZO optimization, curvature scores evolve over training. Early iterations often yield noisy and flat curvature scores, for which a larger sampling budget helps maintain exploration and stable updates; later, as the curvature score distribution becomes sharper, a smaller budget typically suffices. Accordingly, a fixed budget can be suboptimal, and we adapt the sampling budget $B$ dynamically using two statistics of the score distribution: \emph{effective support size} and \emph{sharpness}.

\paragraph{Effective Support Size.}
Motivated by the variance-minimizing sampling rule $\pi_i^\star \propto \sqrt{S_i}$, we define the \emph{effective support size} as the effective number of coordinates contributing to the score distribution:
\begin{equation}
d_{\text{eff}} = \frac{\left( \sum_i \sqrt{S_i} \right)^2}{\sum_i S_i} \in [1, d].
\end{equation}
A smaller $d_{\mathrm{eff}}$ indicates that scores concentrate on fewer coordinates, so a smaller sampling budget is typically sufficient; conversely, when the scores are more broadly distributed across coordinates, $d_{\mathrm{eff}}$ approaches $d$ (the total number of parameter coordinates), suggesting a larger budget.


\paragraph{Score Sharpness.}
To quantify the sharpness of the curvature score distribution, we compute the normalized Shannon entropy \cite{shannon1948mathematical, gal2015dropout}:
\begin{equation}
H = - \frac{1}{\log d} \sum_{i=1}^d p_i \log p_i \in [0,1],
\end{equation}
where $p_i \triangleq \frac{\sqrt{S_i}}{\sum_{j=1}^d \sqrt{S_j}}$ is the normalized score distribution induced by the variance-minimizing sampling rule $\pi_i \propto \sqrt{S_i}$.
A larger $H$ indicates a more uniform (less decisive) distribution, motivating a larger budget; a smaller $H$ indicates a sharper distribution, allowing a smaller budget.


\paragraph{Adaptive Budget Selection.}
We set the sampling budget $B$ by combining effective support size and sharpness:
\begin{equation}
B \! = \! B_{{\text{min}}} \! + \! (B_{{\text{max}}} \! - \! B_{{\text{min}}}) \left( \alpha\frac{d_{\text{eff}}}{d} \! + \! (1 \! - \! \alpha) H \right),
\end{equation}
where $\alpha\in[0,1]$ controls the trade-off between the two measures.

Intuitively, $d_{\text{eff}}/d$ increases as the scores become less concentrated (i.e., more coordinates contribute), and $H$ increases as the induced distribution becomes more uniform; both indicate that a larger budget is needed. Conversely, when the scores are concentrated on fewer coordinates and the distribution is sharper, a smaller budget suffices.


\subsection{Convergence Analysis}

In this section, we present a convergence analysis of the proposed CurvZO. The analysis is performed on the smoothed objective $\mathcal{L}_\pi(\omegab) = \mathbb{E}_{\m,\z}[\mathcal{L}(\omegab + \epsilon \v)]$, where $\v = \m \odot \z$ is the sparse perturbation. Specifically, we (i) establish a standard descent inequality for the smoothed loss, (ii) leverage curvature-guided sampling to bound the second moment of the sparse gradient estimator, and (iii) relate the smoothed gradient back to the true gradient of the original objective $\L(\omegab)$.
Our main result shows that, under a fixed stepsize and a variance-minimizing sampling budget satisfying $B=\sum_{i} \pi_i$, CurvZO achieves an $\mathcal{O}(1/T)$ convergence rate up to a variance term depending on $B$ and a small smoothing bias controlled by $\epsilon$.

Firstly, we assume that the loss function $\L(\omegab)$ is $L$-smooth.
\begin{assumption}[Lipschitz Continuous]\label{assu L}
The loss function $\L(\omegab)$ has an $L$-Lipschitz continuous gradient, i.e.,
\begin{equation}
\left\| \nabla \L(\omegab) - \nabla\L(\omegab^\prime) \right\| \le L \left\|\omegab - \omegab^\prime \right\|,
\end{equation}
where $\nabla \L(\omegab)$ denotes the true first-order gradient of $\L$ at $\omegab$, and $L>0$ is the smoothness constant.
\end{assumption}
Note that $\L_\pi$ inherits $L$-smoothness from $\L$: since $\L_\pi(\omegab)=\E_{\m,\z}[\L(\omegab+\epsilon \v)]$ averages $\L$ over random perturbations in a neighborhood of $\omegab$, $\L_\pi$ is also $L$-smooth.

Under Assumption \ref{assu L}, for any $\omegab, \omegab^\prime \in \mathbb{R}^d$, we have
\begin{equation}
\L(\omegab') \le \L(\omegab) + \left\langle \nabla \L(\omegab),\, \omegab' \! - \! \omegab \right\rangle
\! + \frac{L}{2}\left\|\omegab' \! - \! \omegab\right\|^2 \!.
\label{eq:descent}
\end{equation}
In particular, by setting $\omegab = \omegab_t$ and
$\omegab' = \omegab_{t+1} = \omegab_t - \eta \tilde{\g}^t$, Eq. \eqref{eq:descent}  yields a one-step descent inequality for $\L_\pi$, which will be used in the subsequent convergence analysis.

Next, to control the difference between the gradient of the smoothed loss and that of the original loss, we introduce the following higher-order regularity assumption.
\begin{assumption}[Hessian Lipschitz Continuity]\label{assu Hessian L}
The loss function $\L(\omegab)$ is three–times continuously differentiable and its Hessian $\nabla^{2}\L(\omegab)$ is $\rho$-Lipschitz continuous, i.e.,
\begin{equation}
    \left\|\nabla^{2}\L(\omegab) \!-\! \nabla^{2}\L(\omegab^\prime)\right\|
    \le \rho \left\|\omegab \! - \! \omegab^\prime \right\|, \forall\, \omegab,\omegab^\prime \in \mathbb{R}^d.
\end{equation}
\end{assumption}
This assumption is standard in analyses of randomized smoothing and ZO methods, as it controls the third-order remainder in Taylor expansions.

Under this assumption, the following lemma bounds the original gradient norm in terms of the smoothed gradient norm.
\begin{lemma}[Bounding $\|\nabla \L\|$ by $\|\nabla \L_\pi\|$]\label{lemma:bound smoo gra}
Suppose Assumption \ref{assu L} holds. Then for any $\omegab \in \mathbb{R}^d$, we have
\begin{equation}
    \left \|\nabla \L(\omegab) \right\|^{2} \le 2 \left\|\nabla \L_\pi(\omegab)\right\|^{2}
    + \mathcal{O}(d \epsilon^{2}).
\end{equation}
\end{lemma}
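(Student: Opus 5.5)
The plan is to bound $\|\nabla\L(\omegab)-\nabla\L_\pi(\omegab)\|$ and then split the squared norm using $\|a+b\|^2\le 2\|a\|^2+2\|b\|^2$. Since $\L_\pi(\omegab)=\E_{\m,\z}[\L(\omegab+\epsilon\v)]$ and $\L$ is $L$-smooth (Assumption~\ref{assu L}), I can differentiate under the expectation (dominated convergence, gradients grow at most linearly). This gives $\nabla\L_\pi(\omegab)=\E_{\m,\z}[\nabla\L(\omegab+\epsilon\v)]$, and hence
\begin{equation*}
\nabla\L(\omegab)-\nabla\L_\pi(\omegab)=\E_{\m,\z}\big[\nabla\L(\omegab)-\nabla\L(\omegab+\epsilon\v)\big].
\end{equation*}

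Next I apply Jensen's inequality and then $L$-smoothness:
\begin{equation*}
\|\nabla\L(\omegab)-\nabla\L_\pi(\omegab)\|^2\le \E_{\m,\z}\|\nabla\L(\omegab+\epsilon\v)-\nabla\L(\omegab)\|^2\le L^2\epsilon^2\,\E_{\m,\z}\|\v\|^2 .
\end{equation*}
The perturbation moment is $\E\|\v\|^2=\sum_i \E[m_i]\E[z_i^2]=\sum_i\pi_i=B\le d$. The difference is therefore at most $L^2\epsilon^2 d$, and more sharply $L^2\epsilon^2 B$. Writing $\nabla\L=\nabla\L_\pi+(\nabla\L-\nabla\L_\pi)$ gives
\begin{equation*}
\|\nabla\L(\omegab)\|^2\le 2\|\nabla\L_\pi(\omegab)\|^2+2L^2\epsilon^2 d=2\|\nabla\L_\pi(\omegab)\|^2+\mathcal{O}(d\epsilon^2),
\end{equation*}
which is the claim.

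I expect the main obstacle to be the justification of the interchange of gradient and expectation, which needs integrability of $\L(\omegab+\epsilon\v)$ and of its gradient. I would handle it by noting that $L$-smoothness gives $\|\nabla\L(\omegab+\epsilon\v)\|\le\|\nabla\L(\omegab)\|+L\epsilon\|\v\|$, which is integrable under the Gaussian–Bernoulli law. No Hessian-Lipschitz assumption is needed for this crude $\mathcal{O}(\epsilon^2 d)$ bound. Assumption~\ref{assu Hessian L} would only matter if one wanted the sharper $\mathcal{O}(\epsilon^4)$-type bias obtained from the symmetry of $\v$, which the lemma as stated does not require.
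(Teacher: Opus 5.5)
Your proposal is correct and follows essentially the same route as the paper: the split $\|a+b\|^2\le 2\|a\|^2+2\|b\|^2$, writing the bias as $\E_{\m,\z}[\nabla\L(\omegab)-\nabla\L(\omegab+\epsilon\v)]$, then Jensen plus $L$-smoothness, and finally $\E\|\v\|^2=\sum_i\pi_i\le d$. The differences are cosmetic. You apply Jensen directly to the squared norm, whereas the paper uses $\E\|\v\|\le\sqrt{\E\|\v\|^2}$; you also compute $\E\|\v\|^2=B$ exactly and justify the gradient--expectation interchange, both of which the paper leaves implicit or approximate.
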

The proof is provided in Appendix \ref{sec:app_bound smoo}. 
This result bridges the smoothed objective $\L_\pi$ and the original objective $\L$, allowing convergence guarantees proved for $\L_\pi$ to be carried over to $\L$. We formalize this transfer in the following theorem.



\begin{theorem}\label{theo:convergence}
Let $\{\omegab_t\}_{t \ge 0}$ be the sequence of iterates generated by the proposed CurvZO with stepsize $\eta = \tfrac{1}{3L}$ and variance-minimizing sampling rule
$\{\pi_i^t\}_{i=1}^d$ satisfying $\sum_{i=1}^d \pi_i^t = B$. Then we have:
\begin{equation}
\begin{aligned}
    \min_{0 \le t \le T} \mathbb{E}\left[\|\nabla \L(\omegab_t)\|^{2}\right]
    &\le
    \frac{12L\left(\L(\omegab_0) - \L^*\right)}{T + 1} \\
    &\quad + \frac{2 C M^{2}}{3 B}
    + \mathcal{O}(d \epsilon^{2}),
\end{aligned}
\end{equation}
where $M$ is an upper bound on $\sum_{i=1}^d \sqrt{F_{ii}(\omegab_t)}$, and $C>0$ is a constant arising from the variance bound in Proposition 3.3.
\end{theorem}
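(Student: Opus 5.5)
The plan is the standard nonconvex SGD template applied to the smoothed objective $\L_\pi$, followed by a transfer back to $\L$ via Lemma~\ref{lemma:bound smoo gra}.

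\textbf{Step 1: one-step descent.} Apply the descent inequality~\eqref{eq:descent} to $\L_\pi$, which is $L$-smooth, with $\omegab_{t+1}=\omegab_t-\eta\tilde{\g}^t$. Conditioning on $\omegab_t$ and taking expectation over $(\m,\z,\B)$ gives
\begin{equation*}
\E[\L_\pi(\omegab_{t+1})]\le \L_\pi(\omegab_t)-\eta\langle\nabla\L_\pi(\omegab_t),\E[\tilde{\g}^t]\rangle+\tfrac{L\eta^2}{2}\E\|\tilde{\g}^t\|^2 .
\end{equation*}
I would use that, for the Horvitz--Thompson estimator of Definition~\ref{def:gra estimator}, $\E[\tilde{\g}^t]$ equals the gradient of the smoothed loss. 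By Stein's identity this holds exactly for the coordinates actually perturbed, and the reweighting by $1/\pi_i$ accounts for the mask. Where only Proposition~\ref{pro:unbias} is available, the discrepancy is $\mathcal{O}(\epsilon^2)$ and gets absorbed into the final $\mathcal{O}(d\epsilon^2)$ term. So the inner-product term is $-\eta\|\nabla\L_\pi(\omegab_t)\|^2$ up to that error.

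\textbf{Step 2: second moment via curvature-guided sampling.} Split $\E\|\tilde{\g}^t\|^2=\|\E\tilde{\g}^t\|^2+\sum_i\mathrm{Var}(\tilde g_i^t)$. The first part is at most $2\|\nabla\L_\pi\|^2$ plus a smoothing error. For the second part, Proposition~\ref{pro:variance} gives $\sum_i\mathrm{Var}(\tilde g^t_i)\le C\sum_i F_{ii}/\pi_i^t$. Under the variance-minimizing rule of Proposition~\ref{pro:sample_dis}, $\pi_i^t=B\sqrt{F_{ii}}/\sum_j\sqrt{F_{jj}}$, ignoring clipping, which only lowers the objective of (P1). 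Hence
\begin{equation*}
\sum_i \frac{F_{ii}}{\pi_i^t}=\frac{\big(\sum_i\sqrt{F_{ii}}\big)^2}{B}\le \frac{M^2}{B}.
\end{equation*}
This gives $\E\|\tilde{\g}^t\|^2\le 2\|\nabla\L_\pi(\omegab_t)\|^2+CM^2/B+\mathcal{O}(\epsilon^2)$-type terms.

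\textbf{Step 3: telescoping.} With $\eta=1/(3L)$, the coefficient on $\|\nabla\L_\pi\|^2$ becomes $\eta-L\eta^2=\tfrac{2}{9L}$, and the variance term becomes $\tfrac{L\eta^2}{2}\cdot\tfrac{CM^2}{B}=\tfrac{CM^2}{18LB}$. Summing over $t=0,\dots,T$, using $\L_\pi\ge \L^*$ up to $\mathcal{O}(\epsilon^2)$, and dividing by $(T+1)\cdot\tfrac{2}{9L}$ gives
\begin{equation*}
\min_t\E\|\nabla\L_\pi\|^2\le \frac{9L/2\,(\L(\omegab_0)-\L^*)}{T+1}+\frac{CM^2}{4B}+\mathcal{O}(\epsilon^2).
\end{equation*}
I would then apply Lemma~\ref{lemma:bound smoo gra}, which doubles these terms and adds $\mathcal{O}(d\epsilon^2)$. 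The resulting constants should be compared with the stated $12L$ and $2CM^2/(3B)$. I expect these to come out no worse after a slightly looser bookkeeping, for example a constant 3 in front of $\|\nabla\L_\pi\|^2$ in the second moment. I would adjust the Step 2 bound so the constants match, since the stated ones are weaker than what I derive.

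\textbf{Main obstacle.} The delicate step is Step 2. Proposition~\ref{pro:variance} bounds variance in terms of $F_{ii}$ at the current iterate, while the sampling actually uses the plug-in scores $S_i^t$. The clean bound $M^2/B$ therefore requires taking the theorem's hypothesis literally, namely that $\pi^t$ is the (P1) optimizer with respect to $F(\omegab_t)$, and that $M$ bounds $\sum_i\sqrt{F_{ii}(\omegab_t)}$ uniformly in $t$. I would state this explicitly. I would also make sure the bias terms from Proposition~\ref{pro:unbias} and from $\L_\pi$ versus $\L$ are all collected into $\mathcal{O}(d\epsilon^2)$.
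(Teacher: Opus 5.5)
Your skeleton matches the paper's: the descent inequality for the $L$-smooth $\L_\pi$, then the plug-in $\pi_i^t=B\sqrt{F_{ii}}/\sum_j\sqrt{F_{jj}}$ giving $\sum_i F_{ii}/\pi_i^t=(\sum_i\sqrt{F_{ii}})^2/B\le M^2/B$, then telescoping, then Lemma~\ref{lemma:bound smoo gra} to return to $\L$. The real difference is Step~2. The paper never separates out the mean. It bounds the full second moment, $\E\|\tilde{\g}^t\|^2\le CM^2/B$, by reading Proposition~\ref{pro:variance} as a second-moment bound (its appendix derivation does compute $\E[\tilde g_i^2]$). So no $\|\nabla\L_\pi\|^2$ term enters there, and $\eta=\frac{1}{3L}$ only fixes the constants. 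You instead use the proposition as literally stated, a variance bound, and pay for $\|\E\tilde{\g}^t\|^2\le2\|\nabla\L_\pi\|^2+\mathcal{O}(\epsilon^4)$ through the stepsize. That is the more standard and more faithful use of the hypothesis, but it leaves much less slack in the constants.

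That is where your bookkeeping has a hole. The bias in Step~1 is not an additive $\mathcal{O}(\epsilon^2)$ error. The term is $\eta\langle\nabla\L_\pi(\omegab_t),b_t\rangle$ with $\|b_t\|=\mathcal{O}(\epsilon^2)$, and $\|\nabla\L_\pi(\omegab_t)\|$ is not bounded a priori. It must therefore be split by Young's inequality, $\eta|\langle\nabla\L_\pi,b_t\rangle|\le\frac{\eta\delta}{2}\|\nabla\L_\pi\|^2+\frac{\eta}{2\delta}\|b_t\|^2$. This lowers your descent coefficient from $\frac{2}{9L}$ to $\frac{2}{9L}-\frac{\delta}{6L}$. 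So the bound you report, with $\frac{9L}{2}$ and $\frac{CM^2}{4B}$ (hence $9L$ and $\frac{CM^2}{2B}$ after the lemma), is not actually derived.

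The stated constants follow exactly when the coefficient stays at least $\frac{1}{6L}$, i.e.\ $\delta\le\frac13$; the choice $\delta=\frac13$ reproduces $12L$ and $\frac{2CM^2}{3B}$ exactly. If you used the paper's split $\delta=1$ on top of your factor-$2$ mean bound, you would get $36L$ and $\frac{2CM^2}{B}$, which do not imply the theorem. Your suggested loosening to a factor $3$ would use up the entire $\frac{1}{6L}$ before the Young term is paid. The paper has no $\|\nabla\L_\pi\|^2$ in its second-moment bound, so it can afford $\delta=1$ and still lands on the same coefficient $\frac{\eta}{2}=\frac{1}{6L}$.

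There are two smaller points. First, your parenthetical on clipping is backwards. Imposing $\pi_i\le1$ can only raise the minimum of (P1) above $(\sum_i\sqrt{F_{ii}})^2/B$. For $F=(1,\varepsilon)$ and $B=\frac32$, the optimum is $\pi=(1,\frac12)$ with value $\approx1$, versus the unclipped formula's $\approx\frac23$. So the $M^2/B$ bound needs the no-clipping regime, which the paper also assumes tacitly.

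Second, Stein's identity does not make $\tilde{\g}$ exactly unbiased for $\nabla\L_\pi$ as defined. It gives $\E[\tilde g_i]=\E[\partial_i\L(\omegab+\epsilon\v)\mid m_i=1]$, whereas $\partial_i\L_\pi(\omegab)$ also averages over $m_i$. The justification you need is your $\mathcal{O}(\epsilon^2)$ fallback via Proposition~\ref{pro:unbias} plus a Taylor expansion of $\nabla\L_\pi$, which is exactly what the paper does.
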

This theorem establishes that the curvature-guided sparse zeroth-order method achieves an $\mathcal{O}(1/T)$ convergence rate for the expected squared gradient norm, up to a variance floor of order $\mathcal{O}(M^{2}/B)$ and a smoothing bias term of order $\mathcal{O}(d \epsilon^{2})$. The proof is provided in Appendix~\ref{sec:app_convergence}.


\subsection{Block-Wise Curvature Scores}
Tracking curvature scores at the per-parameter level incurs substantial computational and memory overhead. 
To mitigate this, we group parameters into disjoint blocks (e.g., tensors or layers) and adopt a block-wise formulation that mirrors the per-parameter approach.

We partition the parameters into $G$ disjoint blocks $\{\G_1,\ldots,\G_G\}$.
Let $\m^{\mathrm{blk}}\in\{0,1\}^G$ with independent entries $m^{\mathrm{blk}}_i \sim\mathrm{Bernoulli}(\pi^{\mathrm{blk}}_i)$.
Given $\z\sim\mathcal{N}(\boldsymbol{0},\boldsymbol{I}_d)$, we define the block-wise sparse perturbation $\v^{\mathrm{blk}}\in\mathbb{R}^d$ by
\begin{equation}
\v^{\mathrm{blk}}_{\G_i} \triangleq m^{\mathrm{blk}}_i\,\z_{\G_i},\qquad i=1,\ldots,G,
\label{eq:block-perturb}
\end{equation}
where $\z_{\G_i}\triangleq \operatorname{concat}\{z_j:j\in \G_i\}\in\mathbb{R}^{|\G_i|}$.
We define the block-wise curvature score for block $\G_i$ as:
\begin{equation}
\tilde{s}^{\mathrm{blk}}_i \triangleq  \frac{\| \v^{\mathrm{blk}}_{\G_i} \|^2_2}{\| \v^{\mathrm{blk}} \|^2_2} \Delta^2.
\label{eq:block-curv-score}
\end{equation}
We smooth $\{\tilde{s}^{\mathrm{blk}}_i\}_{i=1}^G$ over iterations using EMA to obtain the smoothed block-level curvature scores $\{S^{\mathrm{blk},t}_i\}_{i=1}^G$, as in Eq.~\eqref{eq:ema}.

Given the response $\Delta$, the block-wise gradient estimator is defined as
\begin{equation}
\tilde \g_{\G_i} =
\frac{\Delta}{\pi^{\mathrm{blk}}_i}\, m^{\mathrm{blk}}_i z_{\G_i}.
\qquad i=1,\dots,G, 
\label{eq:block estimator}
\end{equation}

\newcolumntype{Y}{>{\centering\arraybackslash}X} 
\begin{table*}[t]
\centering
\small
\setlength{\aboverulesep}{0pt}
\setlength{\belowrulesep}{0pt}
\renewcommand{\arraystretch}{1.15} 
\caption{Accuracy (\%) on OPT-2.7B fine-tuning with 1{,}000 training samples. FT and LoRA are first-order (backpropagation-based) oracle baselines. Better results among MeZO, DiZO, and CurvZO are highlighted in \textbf{bold}.}
\begin{tabularx}{\textwidth}{l *{9}{Y}} 
\toprule
\cmidrule(lr){2-7} \cmidrule(lr){8-9}
\textbf{Task} & \textbf{SST-2} & \textbf{RTE} & \textbf{CB} & \textbf{BoolQ} & \textbf{WSC} & \textbf{WIC} & \textbf{SQuAD} & \textbf{DROP} & \textbf{Average} \\
 \multicolumn{1}{c}{} &
\multicolumn{6}{c}{\raisebox{0.3ex}{\small ----------------——-------classification--------------------------}} &
\multicolumn{2}{c}{\raisebox{0.3ex}{\small ----generation----}} &
\multicolumn{1}{c}{} \\
\midrule
Zero-shot & 56.3 & 54.1 & 50.0 & 47.6 & 36.5 & 52.6 & 29.8 & 9.5 & 42.1 \\
FT & 93.8 & 80.1 & 82.1 & 72.6 & 63.4 & 69.1 & 82.9 & 33.2 & 72.2 \\
LoRA & 94.7 & 81.6 & 82.7 & 74.1 & 59.6 & 62.1 & 84.5 & 35.1 & 71.8 \\
\midrule
MeZO & 92.1 & 63.3 & 67.8 & 66.7 & 58.6 & 59.8 & 79.0 & 26.5 & 64.2 \\
DiZO & \textbf{94.4} & 65.6 & 66.0 & 66.8 & 56.7 & 57.1 & 62.1 & 22.5 & 61.4 \\
\rowcolor{gray!10} 
\textbf{CurvZO} & 94.1 & \textbf{67.7} & \textbf{69.8} & \textbf{68.9} & \textbf{62.9} & \textbf{61.9} & \textbf{81.5} & \textbf{27.6} & \textbf{66.8} \\
\midrule
MeZO + LoRA & 93.0 & 58.1 & 64.2 & 63.2 & 56.7 & 53.7 & 80.8 & 26.9 & 62.1 \\
DiZO + LoRA & 92.0 & 55.5 & 64.2 & 64.0 & 58.6 & 54.2 & 67.5 & 22.3 & 59.8 \\
\rowcolor{gray!10} 
\textbf{CurvZO + LoRA} & \textbf{93.6} & \textbf{61.7} & \textbf{69.6} & \textbf{66.1} & \textbf{63.4} & \textbf{57.6} & \textbf{81.1} & \textbf{27.6} & \textbf{65.1} \\
\bottomrule
\end{tabularx} 
\vspace{-3mm}
\label{tab:opt-2.7b}
\end{table*}

\begin{table}[t]
\centering
\setlength{\tabcolsep}{4.5pt}
\small
\caption{Accuracy (\%) on OPT-6.7B fine-tuning with 1{,}000 training samples.}
\resizebox{\columnwidth}{!}{
\begin{tabular}{llcccccccc}
\toprule
\textbf{Task} & \textbf{SST-2} & \textbf{RTE} & \textbf{WIC} & \textbf{WSC} & \textbf{SQuAD}  \\
\multicolumn{1}{c}{} &
\multicolumn{4}{c}{\raisebox{0.3ex}{\footnotesize -——-------classification-----------}} &
\multicolumn{1}{c}{\raisebox{0.3ex}{\footnotesize -generation-}} &
\multicolumn{1}{c}{} \\
\midrule
MeZO           & 94.1 & 70.3 & \textbf{62.5}  & 60.5 & 81.6 \\
DiZO          & 91.6 & 66.4 & 61.6 &  59.6 & 67.8 \\
\rowcolor{gray!10}
\textbf{CurvZO}  & \textbf{94.7} & \textbf{72.2} & 61.7  & \textbf{61.5}  & \textbf{83.7} \\
\midrule
MeZO + LoRA           & 93.0  & \textbf{64.9} &  61.4  & 60.5 & 80.1 \\
DiZO + LoRA        & 91.4 & 58.8 & 57.2 & 58.6 & 71.5 \\
\rowcolor{gray!10}
\textbf{CurvZO + LoRA}  & \textbf{93.1} & 63.5 & \textbf{61.5}  & \textbf{63.4}  & \textbf{80.6} \\
\bottomrule
\end{tabular}
}
\vspace{-2mm}
\label{tab:opt-6.7b}
\end{table}

\begin{proposition}[Block-wise Consistency]\label{pro:block-wise consist}
For each parameter block $\G_i$, the block-wise gradient estimator $\tilde{\g}_{\G_i}$ in Eq. \eqref{eq:block estimator} satisfies
\begin{equation}
\E_{\m^{\mathrm{blk}}, \z}[\tilde{\g}_{\G_i}] = \g_{\G_i} + \mathcal{O}(\epsilon^2), ~ i=1,\ldots,G,
\end{equation}
where $\g_{\G_i} = \operatorname{concat}\{ g_j : j \in \G_i \} \in\mathbb{R}^{|\G_i|}$.

Moreover, let $F^{\mathrm{blk}}_i = \sum_{j \in \G_i} F_{jj}$ denote the block-wise diagonal Fisher quantity. 
The block-level variance of $\tilde{\g}_{\G_i} \triangleq \sum_{j \in \G_i} \mathrm{Var}(\tilde{g}_j)$ admits the upper bound
\begin{equation}
\mathrm{Var}(\tilde{\g}_{\mathcal{G}_i}) \le C\,\frac{F_i^{\mathrm{blk}}}{\pi^{\mathrm{blk}}_i},
\end{equation} 
for some constant $C>0$.
\end{proposition}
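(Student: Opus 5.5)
The plan is to treat the block-wise estimator as the per-parameter estimator of Definition~\ref{def:gra estimator} with a block-structured mask. We will then repeat the arguments behind Propositions~\ref{pro:unbias} and~\ref{pro:variance} at the block level.

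\emph{Step 1 (unbiasedness).} Taylor-expand the two-point response. Assumption~\ref{assu Hessian L} makes the even-order terms cancel, giving
\begin{equation}
\Delta = \g^\top \v^{\mathrm{blk}} + \mathcal{O}(\epsilon^2 \|\v^{\mathrm{blk}}\|^3),
\end{equation}
and the remainder has finite expectation because Gaussian moments are finite. Write $\g^\top\v^{\mathrm{blk}}=\sum_{k=1}^G m^{\mathrm{blk}}_k \g_{\G_k}^\top \z_{\G_k}$. Then for $j\in\G_i$,
\begin{equation}
\E\bigl[m^{\mathrm{blk}}_i z_j\, \g^\top \v^{\mathrm{blk}}\bigr]=\sum_k \E\bigl[m^{\mathrm{blk}}_i m^{\mathrm{blk}}_k\bigr]\,\E\bigl[z_j\,\g_{\G_k}^\top \z_{\G_k}\bigr].
\end{equation}
This uses independence of the mask and $\z$. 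The second factor vanishes unless $k=i$, and for $k=i$ it equals $g_j$. Also $\E[(m^{\mathrm{blk}}_i)^2]=\pi^{\mathrm{blk}}_i$. Dividing by $\pi^{\mathrm{blk}}_i$ gives $g_j+\mathcal{O}(\epsilon^2)$, which is the first claim.

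\emph{Step 2 (variance).} We bound $\mathrm{Var}\le$ second moment and drop the $\mathcal{O}(\epsilon^2)$ contributions into the constant. Using $(m^{\mathrm{blk}}_i)^2=m^{\mathrm{blk}}_i$,
\begin{equation}
\E\|\tilde\g_{\G_i}\|^2=\frac{1}{(\pi^{\mathrm{blk}}_i)^2}\E\bigl[m^{\mathrm{blk}}_i\,\Delta^2\|\z_{\G_i}\|^2\bigr].
\end{equation}
Conditioning on $m^{\mathrm{blk}}_i=1$ contributes a factor $\pi^{\mathrm{blk}}_i$, which leaves $\frac{1}{\pi^{\mathrm{blk}}_i}\E[(\g^\top\v^{\mathrm{blk}})^2\|\z_{\G_i}\|^2\mid m^{\mathrm{blk}}_i=1]$. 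Expanding the Gaussian fourth moments splits this into two pieces. The own-block piece is of order $(|\G_i|+2)\|\g_{\G_i}\|^2$. The cross-block piece is $|\G_i|\sum_{k\ne i}\pi^{\mathrm{blk}}_k\|\g_{\G_k}\|^2$.

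\emph{Step 3 (passing to Fisher).} Take expectation over the minibatch as in Proposition~\ref{pro:variance}. This turns $\E_{\B}[g_j^2]$ into quantities bounded by a multiple of $F_{jj}$, through the same identification of the empirical squared gradient with the Fisher diagonal used there. The own-block piece then becomes $\lesssim F^{\mathrm{blk}}_i/\pi^{\mathrm{blk}}_i$ directly.

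\emph{Main obstacle.} The cross-block term is not proportional to $F^{\mathrm{blk}}_i$. We will handle it exactly as Proposition~\ref{pro:variance} presumably handles the analogous off-coordinate term. That argument absorbs the dimension factor $|\G_i|$ and the shared quantity $\sum_k \pi_k^{\mathrm{blk}}F^{\mathrm{blk}}_k$ into the constant $C$, under the standing implicit assumption that these are bounded relative to $F^{\mathrm{blk}}_i$. The resulting bound is $\mathrm{Var}(\tilde\g_{\G_i})\le C F^{\mathrm{blk}}_i/\pi^{\mathrm{blk}}_i$. Alternatively, we will invoke Proposition~\ref{pro:variance} coordinate-wise with the mask indicator shared within a block. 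Its proof only uses $\E[m_j]=\pi_j$ and independence from $\z$, both of which hold with $\pi_j=\pi^{\mathrm{blk}}_i$ for $j\in\G_i$. Summing over $j\in\G_i$ then gives the block bound.
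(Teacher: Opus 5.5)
Your proposal follows essentially the same route as the paper's proof sketch. For unbiasedness you use the Horvitz--Thompson-reweighted Taylor expansion, with cross-block terms vanishing by independence. For the variance you use a second-moment bound via $(m^{\mathrm{blk}}_i)^2=m^{\mathrm{blk}}_i$ that mirrors the coordinate-wise analysis, and you absorb the shared cross-block term into $C$, which the paper also does, only implicitly. Your explicit expression $\frac{1}{\pi^{\mathrm{blk}}_i}\bigl[(|\G_i|+2)\|\g_{\G_i}\|^2+|\G_i|\sum_{k\neq i}\pi^{\mathrm{blk}}_k\|\g_{\G_k}\|^2\bigr]$ is more careful than the paper's sketch, and it correctly shows that this absorption needs an extra assumption.

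One small correction concerns your fallback route. The proof of Proposition~\ref{pro:variance} does use independence of mask entries across coordinates, namely $\E[v_j^2v_l^2]=\pi_j\pi_l$, and this becomes $\pi^{\mathrm{blk}}_i$ when $j,l$ lie in the same block. Your direct Step~2 computation already accounts for this correctly, so nothing is lost.
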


Analogous to Problem~(P1), we construct the block sampling probabilities $\boldsymbol{\pi}^{\mathrm{blk}}$ by solving the optimization problem (P2), which minimizes the total variance upper bound $\sum_{i=1}^G \mathrm{Var}(\tilde{\g}_{\mathcal{G}_i})$ subject to the budget constraint $B = \sum_{i=1}^G \pi^{\mathrm{blk}}_i$.
\begin{proposition}[Block-wise Variance-Minimizing Sampling Distribution]\label{pro:block-wise sampl dis}
Let $\boldsymbol{\pi}^{\mathrm{blk},\star}=\{\pi_i^{\mathrm{blk}\star}\}_{i=1}^G$ be an optimal solution to Problem (P2), it satisfies
\begin{equation}
\pi_i^{\mathrm{blk}\star} \propto \sqrt{F_i^{\mathrm{blk}}}, \qquad i=1,\ldots,G,
\label{eq:pi-block}
\end{equation}
which is the block-level counterpart of the per-parameter rule $\pi_j^\star \propto \sqrt{F_{jj}}$, obtained by aggregating diagonal Fisher information within each block as $F_i^{\mathrm{blk}}=\sum_{j\in\mathcal{G}_i}F_{jj}$.
\end{proposition}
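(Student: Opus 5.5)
We first state Problem (P2) explicitly, using the block-level variance bound of Proposition~\ref{pro:block-wise consist}. Dropping the constant $C>0$, which does not affect the minimizer, (P2) reads
\begin{equation*}
\min_{\boldsymbol{\pi}^{\mathrm{blk}}}\ \sum_{i=1}^G \frac{F_i^{\mathrm{blk}}}{\pi_i^{\mathrm{blk}}}
\quad\text{s.t.}\quad \sum_{i=1}^G \pi_i^{\mathrm{blk}} = B,\qquad 0<\pi_i^{\mathrm{blk}}\le 1 .
\end{equation*}
This has exactly the form of (P1), with $d$ replaced by $G$ and $F_{ii}$ replaced by $F_i^{\mathrm{blk}}$. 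The cleanest route is therefore to invoke Proposition~\ref{pro:sample_dis} after this relabeling. For self-containedness, we also sketch the direct argument.

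The objective $\sum_i F_i^{\mathrm{blk}}/\pi_i$ is convex on $\pi_i>0$ because each term $a/x$ with $a\ge 0$ is convex. The constraints are linear, so the KKT conditions are sufficient for optimality. We first ignore the box constraint $\pi_i\le 1$ and form the Lagrangian $\sum_i F_i^{\mathrm{blk}}/\pi_i + \lambda(\sum_i\pi_i - B)$. Setting the derivative to zero gives $F_i^{\mathrm{blk}}/\pi_i^2=\lambda$, hence $\pi_i^{\mathrm{blk}\star}=\sqrt{F_i^{\mathrm{blk}}/\lambda}\propto\sqrt{F_i^{\mathrm{blk}}}$. The budget constraint then fixes the normalization, $\pi_i^{\mathrm{blk}\star}=B\sqrt{F_i^{\mathrm{blk}}}/\sum_j\sqrt{F_j^{\mathrm{blk}}}$.

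Alternatively, a Cauchy--Schwarz argument gives global optimality directly:
\begin{equation*}
\Bigl(\sum_i\sqrt{F_i^{\mathrm{blk}}}\Bigr)^2\le \Bigl(\sum_i \frac{F_i^{\mathrm{blk}}}{\pi_i}\Bigr)\Bigl(\sum_i\pi_i\Bigr)= B\sum_i \frac{F_i^{\mathrm{blk}}}{\pi_i}.
\end{equation*}
Equality holds iff $\pi_i\propto\sqrt{F_i^{\mathrm{blk}}}$, which confirms the minimizer.

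The main obstacle is the box constraint $\pi_i\le 1$ when some unconstrained value exceeds $1$. Here we use the KKT multipliers $\mu_i\ge 0$ for $\pi_i\le 1$. Complementary slackness shows that saturated blocks sit at $\pi_i=1$. Every unsaturated block still satisfies $F_i^{\mathrm{blk}}/\pi_i^2=\lambda$, so $\pi_i\propto\sqrt{F_i^{\mathrm{blk}}}$ among them, with the remaining budget redistributed (water-filling). This is the same ``proportional rule clipped at $1$'' reading stated in Proposition~\ref{pro:sample_dis}, which we adopt here.

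Finally, substituting $F_i^{\mathrm{blk}}=\sum_{j\in\mathcal{G}_i}F_{jj}$ shows the rule is the block analogue of the per-parameter rule. With singleton blocks ($G=d$) it reduces exactly to $\pi_j^\star\propto\sqrt{F_{jj}}$, which gives the stated correspondence.
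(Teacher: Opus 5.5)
Your proposal is correct and takes essentially the same route as the paper, which reapplies the per-parameter Lagrange-multiplier computation (stationarity $F_i^{\mathrm{blk}}/\pi_i^2=\lambda$) to the block objective $\sum_i F_i^{\mathrm{blk}}/\pi_i^{\mathrm{blk}}$ under $\sum_i \pi_i^{\mathrm{blk}}=B$. Your convexity/Cauchy--Schwarz argument for global optimality and your KKT treatment of $\pi_i^{\mathrm{blk}}\le 1$ add rigor that the paper's sketch lacks, since the sketch ignores the box constraint; they also correctly show that the proportional rule is exact only when no block saturates, and otherwise holds among the unsaturated blocks (water-filling).
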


Using block-wise curvature scores preserves the same sampling rule. Specifically, we replace per-parameter scores with block-level scores and compute block sampling probabilities via the variance-motivated rule: $\pi_i^{\mathrm{blk}} \propto \sqrt{S_i^{\mathrm{blk}}}$.
Moreover, the budget selection depends only on the normalized curvature score distribution induced by $\{S_i^{\mathrm{blk}}\}_{i=1}^G$, not on within-block dimensionality. Therefore, the same adaptivity indicators extend naturally to the block level by replacing the total parameter count $d$ with the number of blocks $G$: the effective support $d_{\mathrm{eff}}$ now measures the effective number of blocks that contribute to the curvature score distribution, while the entropy $H$ captures its sharpness.
The proofs of Propositions~\ref{pro:block-wise consist} and~\ref{pro:block-wise sampl dis} are provided in Appendix~\ref{sec:app block-wise}.


\section{Experiments}
\subsection{Experimental Settings}
We experiment with two open-source LLM families of different sizes \cite{zhang2022opt, touvron2023llama}: OPT (2.7B, 6.7B) and Llama2 (7B, 13B), and evaluate on SuperGLUE \cite{wang2019superglue}. We primarily compare against two representative ZO baselines, MeZO \cite{malladi2023fine} and DiZO \cite{tan2025harmony}, and also consider parameter-efficient fine-tuning via LoRA \cite{hu2022lora}. All experiments use block-wise curvature scores, where each block corresponds to one of the model’s native parameter tensors. Additional experimental details and ablation studies are provided in Appendix~\ref{app:exper} and Appendix~\ref{app:abla}, respectively. Our code is released at https://anonymous.4open.science/r/CurvZO-9F35.

\begin{figure}[t]
    \centering
    \includegraphics[width=0.95\linewidth]{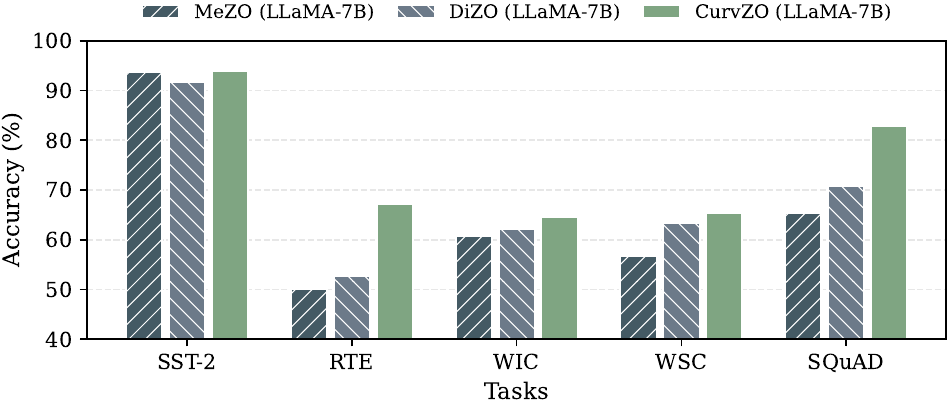}
    \includegraphics[width=0.95\linewidth]{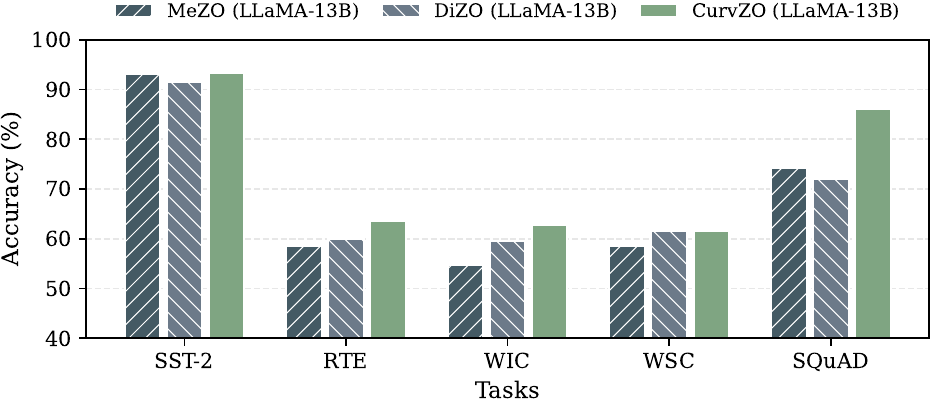}
    \caption{Accuracy (\%) on fine-tuning Llama2-7B (top) and Llama2-13B (bottom) with 1{,}000 training samples.}
    \label{fig:llama}
    \vspace{-2mm}
\end{figure}

\subsection{Performance}
We evaluate the generalizability of CurvZO across different model architectures and scales. 
Fine-tuning accuracy results are reported for OPT-2.7B and OPT-6.7B in Tables~\ref{tab:opt-2.7b} and~\ref{tab:opt-6.7b}, respectively, and for the Llama series in Figure~\ref{fig:llama}. 
In addition, we compare the convergence behavior, memory usage, and GPU hours of CurvZO and MeZO on OPT-2.7B across tasks: convergence curves are shown in Figure~\ref{fig:convergence}, memory usage is reported in Table~\ref{tab:memory}, and GPU-hour comparisons are provided in Figure~\ref{fig:gpu_time}.

\begin{figure}[t]
    \centering
    \begin{subfigure}{0.49\linewidth}
        \centering
        \includegraphics[width=\linewidth]{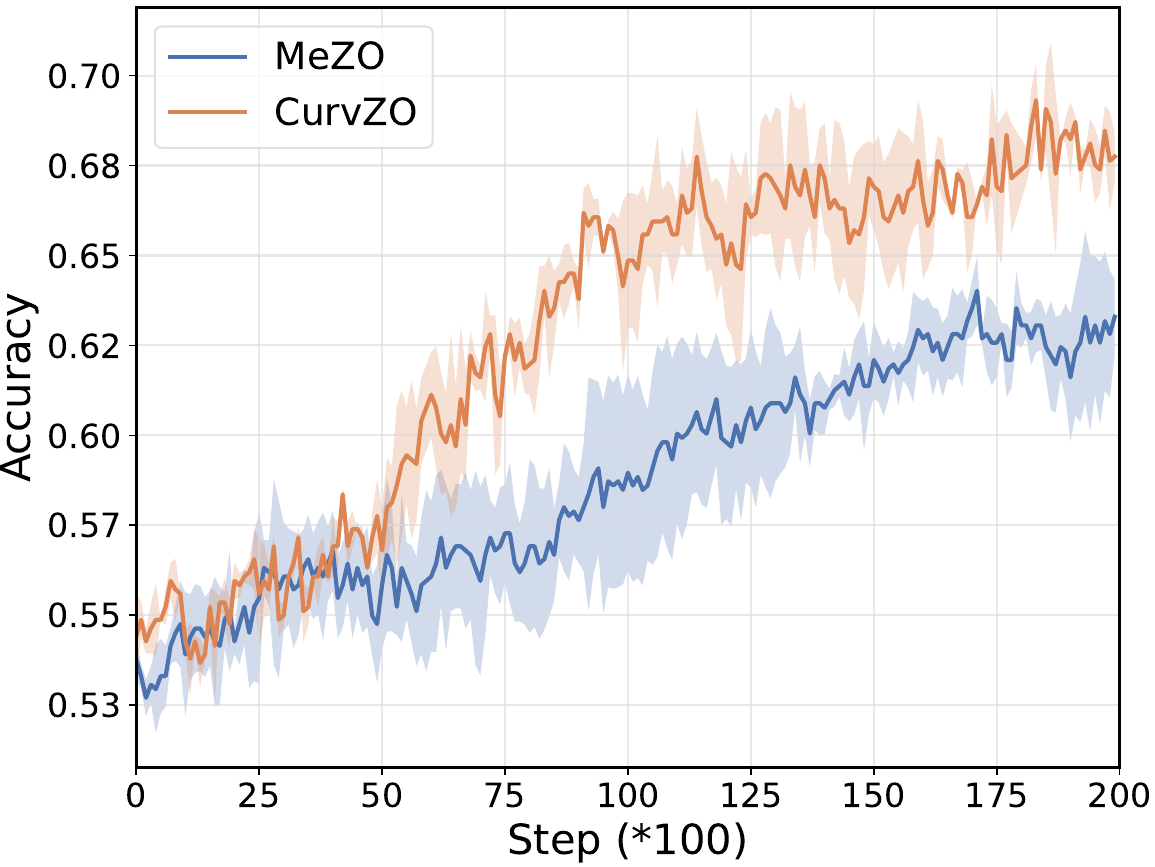}
        \caption{RTE}
        \label{fig:convergence RTE}
    \end{subfigure}
    \hfill
    \begin{subfigure}{0.49\linewidth}
        \centering
        \includegraphics[width=\linewidth]{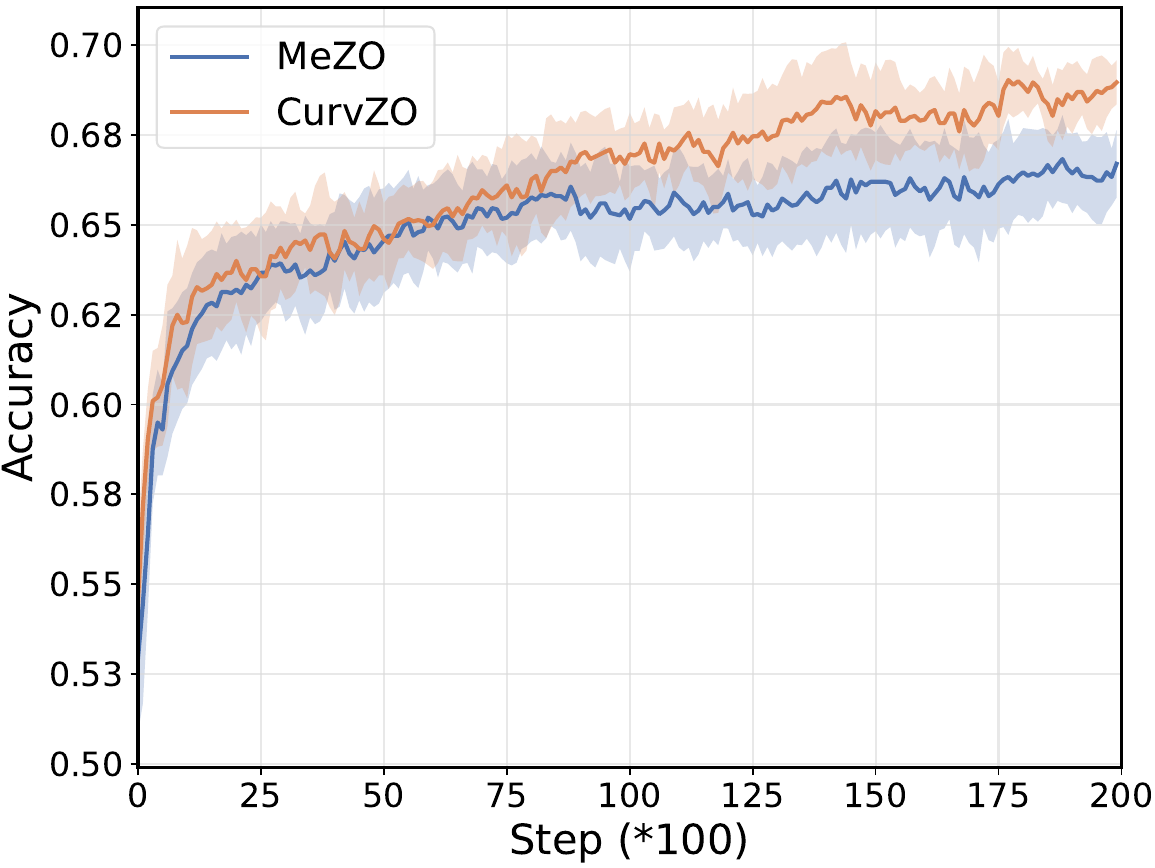}
        \caption{BoolQ}
        \label{fig:convergence BoolQ}
    \end{subfigure}
    \caption{Convergence curves of fine-tuning OPT-2.7B with MeZO and CurvZO on (a) RTE, (b) BoolQ tasks.}
    \vspace{-2mm}
    \label{fig:convergence}
\end{figure}

\begin{table}[t]
\centering
\setlength{\tabcolsep}{4.5pt}
\small
\caption{Memory usage (batch size = 1) of fine-tuning OPT-2.7B and OPT-6.7B (1,000 examples). For OPT-6.7B, FT exceeds the 80\,GB memory limit of a single GPU, while MeZO and CurvZO remain memory-efficient.}
\resizebox{\columnwidth}{!}{
\begin{tabular}{llccccccccc}
\toprule
\textbf{Model} &\textbf{Task} & \textbf{SST-2} & \textbf{RTE} & \textbf{WIC} & \textbf{WSC} & \textbf{SQuAD}  \\
\multicolumn{2}{c}{} &
\multicolumn{4}{c}{\raisebox{0.3ex}{\footnotesize —-------classification---------}} &
\multicolumn{1}{c}{\raisebox{0.3ex}{\footnotesize -generation-}} &
\multicolumn{1}{c}{} \\
\midrule
OPT-2.7B & FT           & 42.88 & 45.28 & 43.48  & 43.21 & 45.28 \\
OPT-2.7B & MeZO         & 5.91 & 5.91 & 5.91  & 5.91 & 5.91 \\
\rowcolor{gray!10}
OPT-2.7B & CurvZO  & 5.91 & 5.91 &  5.91 &  5.91 & 5.91 \\
\midrule
OPT-6.7B & FT           & $>$ 80.00 & $>$ 80.00 & $>$ 80.00  & $>$ 80.00 & $>$ 80.00 \\
OPT-6.7B & MeZO           & 13.94 & 13.94 & 13.94 & 13.94 & 13.94 \\
\rowcolor{gray!10}
OPT-6.7B & CurvZO  & 13.95 & 13.95 & 13.95 & 13.95 & 13.95 \\
\bottomrule
\end{tabular}
}
\vspace{-2mm}
\label{tab:memory}
\end{table}

\begin{figure}[t]
    \centering
    \includegraphics[width=\linewidth]{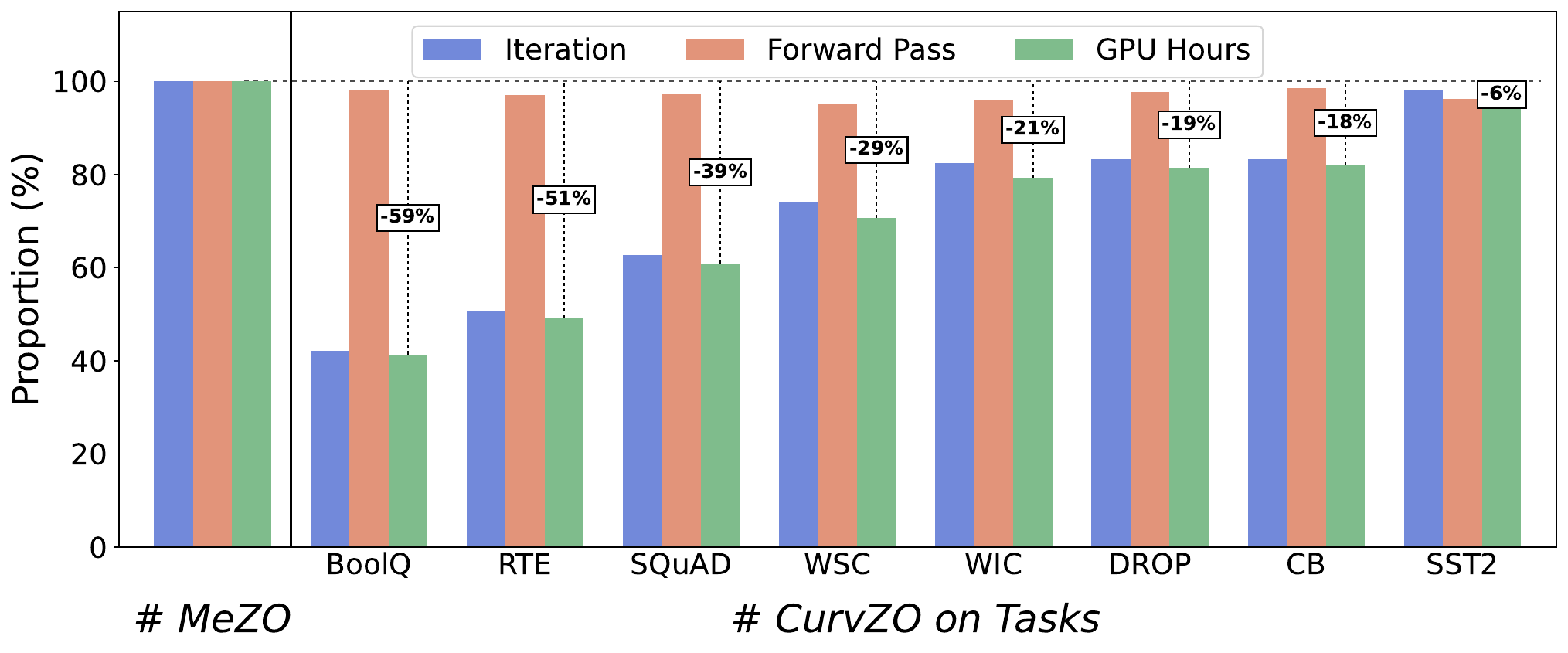}
    \caption{GPU hours of MeZO and CurvZO across tasks, together with convergence iterations and forward pass time.}
    \label{fig:gpu_time}
\vspace{-4mm}
\end{figure}

We highlight the key experimental findings as follows.

\textbf{CurvZO outperforms baselines in both standard and parameter-efficient settings.} As detailed in Table \ref{tab:opt-2.7b}, CurvZO consistently surpasses MeZO and DiZO, with or without LoRA, while achieving performance competitive with FO methods. Specifically, it achieves the best results on five out of six classification benchmarks and leads in both text generation tasks. For instance, CurvZO achieves notable improvements on WSC ($\uparrow 4.3\%$), SQuAD ($\uparrow 2.5\%$) and BoolQ ($\uparrow 2.1\%$)  demonstrating its robustness across diverse tasks. Similarly, in the LoRA setting, CurvZO yields substantial gains on CB ($\uparrow 5.4\%$), WSC ($\uparrow 4.8\%$), and RTE ($\uparrow 3.6\%$).  
Crucially, these performance gains extend to larger scales (OPT-6.7B, Table~\ref{tab:opt-6.7b}) and different architectures, as evidenced by consistent improvements on both Llama2-7B and Llama2-13B (Figure~\ref{fig:llama}). 

\textbf{CurvZO converges in fewer iterations and achieves lower training GPU hours than MeZO.}
As shown in Figure~\ref{fig:convergence}, CurvZO reaches the same accuracy as MeZO in approximately $2.4\times$ fewer optimization steps on BoolQ and $2.0\times$ fewer steps on RTE.
Crucially, this faster convergence translates directly into reduced computational cost:
Figure~\ref{fig:gpu_time} shows that CurvZO consistently reduces total GPU hours across all tasks, with savings of up to 59\% and 51\% on BoolQ and RTE, respectively, demonstrating substantial efficiency gains.


\textbf{Memory efficiency on par with MeZO.}
Table~\ref{tab:memory} shows that CurvZO uses essentially the same GPU memory as MeZO: identical on OPT-2.7B and only slightly higher on OPT-6.7B due to storing curvature scores. This overhead is negligible compared to full fine-tuning, while CurvZO delivers better accuracy and faster convergence than MeZO.

\section{Conclusion}
With LLMs rapidly scaling, memory-efficient fine-tuning is increasingly important. ZO optimization avoids backpropagation but often suffers from slow or unstable convergence due to high-variance gradient estimates. Sparse perturbations help, yet their effectiveness depends critically on which parameters are selected for perturbation under a limited budget. We propose \textbf{CurvZO}, which tracks online curvature scores from scalar ZO feedback to select parameters for sparse perturbations, and further adapts the perturbation budget based on the evolving score distribution. Extensive experiments on OPT and Llama across diverse NLP tasks show that CurvZO consistently outperforms ZO baselines and converges faster, improving accuracy by up to 4.4 points and reducing GPU hours by up to $2\times$, while maintaining ZO-level memory efficiency.


\section{Limitations}
We stabilize the curvature score to reduce perturbation-scale noise and temporal variability, and introduce adaptive budget selection to balance exploration and exploitation based on the evolving scores. While these designs improve the reliability of curvature tracking and lead to more stable and efficient sparse updates, our curvature score is computed per coordinate/block and reflects only local sensitivity. As a result, it does not capture coupling between parameters. This limitation may result in mildly suboptimal sampling in some regimes where interaction-aware second-order information is important, potentially leaving some performance gains unrealized.




\section*{Impact Statement}
This work introduces CurvZO, which tracks curvature signals to guide sparse ZO updates for memory-efficient LLM fine-tuning. By improving the efficiency and scalability of ZO fine-tuning, it may broaden access to model adaptation on resource-constrained hardware and reduce the cost of practical deployment. CurvZO is an optimization method and does not introduce direct societal risks on its own.


\bibliography{example_paper}
\bibliographystyle{icml2026}

\newpage
\appendix


\input{appendix}

\begin{table*}[t]
\centering
\small
\setlength{\aboverulesep}{0pt}
\setlength{\belowrulesep}{0pt}
\renewcommand{\arraystretch}{1.15} 
\caption{Fine-tuning results on OPT-2.7B across tasks.
\textbf{CurvZO} combines curvature-guided sampling with adaptive budget selection.
\textbf{US+AB} (\emph{Uniform Sampling + Adaptive Budget}) uses the same adaptive budget selection as CurvZO, but samples parameters uniformly at random.
\textbf{CurvZO ($B=\rho d$)} uses curvature-guided sampling with a fixed perturbation budget $B=\rho d$ (e.g., $\rho\in\{0.1,0.4,0.7\}$).
The best results in each block are highlighted.}
\begin{tabularx}{\textwidth}{l *{9}{Y}} 
\toprule
\cmidrule(lr){2-7} \cmidrule(lr){8-9}
\textbf{Task} & \textbf{SST-2} & \textbf{RTE} & \textbf{CB} & \textbf{BoolQ} & \textbf{WSC} & \textbf{WIC} & \textbf{SQuAD} & \textbf{DROP} & \textbf{Average} \\
 \multicolumn{1}{c}{} &
\multicolumn{6}{c}{\raisebox{0.3ex}{\small ----------------——-------classification--------------------------}} &
\multicolumn{2}{c}{\raisebox{0.3ex}{\small ----generation----}} &
\multicolumn{1}{c}{} \\
\midrule
US + AB & 90.8 & 59.9 & 66.7 & 64.2 & 58.6 & 56.5 & 77.8 & 25.1 & 62.5 \\
CurvZO ($B=0.1 d$) & 91.3 & 64.9 & 60.1 & \textbf{69.1} & 59.6 & 59.7 & 78.6 & 24.6 & 63.5  \\
CurvZO ($B=0.4 d$) & \textbf{94.4} & 67.5 & 67.8 & 65.1 & 56.7 & 60.1 & 80.2 & 24.1 & 64.5 \\
CurvZO ($B=0.7 d$) & 93.0 & 63.8 & 63.2 & 67.3 & 57.6 & 61.1 & 80.0 & 23.1 & 63.6 \\
\rowcolor{gray!10} 
\textbf{CurvZO} & 94.1 & \textbf{67.7} & \textbf{69.8} & 68.9 & \textbf{62.9} & \textbf{61.9} & \textbf{81.5} & \textbf{27.6} & \textbf{66.8} \\
\midrule
US + AB + LoRA & 78.4 & 61.3 & 57.1 & 65.9 & 58.6 & 57.0 & 75.3 & 17.8 & 58.9 \\
CurvZO ($B=0.1 d$) + LoRA & 72.0 & 59.9 & 66.0 & 65.4 & 63.4 & 54.5 & 70.6 & 19.3 & 58.9  \\
CurvZO ($B=0.4 d$) + LoRA & 91.4 & 60.3 & 57.1 & 63.7 & 63.4 & \textbf{58.3} & 75.9 & 20.0 & 61.3  \\
CurvZO ($B=0.7 d$) + LoRA & 86.9 & 61.0 & 60.7 & 65.3 & 63.4 & 55.3 & 75.4 & 20.8 & 61.1  \\
\rowcolor{gray!10} 
\textbf{CurvZO + LoRA} & \textbf{93.6} & \textbf{61.7} & \textbf{69.6} & \textbf{66.1} & \textbf{63.4} & 57.6 & \textbf{81.1} & \textbf{26.9} & \textbf{65.0} \\
\bottomrule
\end{tabularx} 
\label{tab:abla opt-2.7b}
\end{table*}

\begin{table}[t]
\centering
\small
\renewcommand{\arraystretch}{1.05}
\setlength{\tabcolsep}{4pt}
\caption{Hyperparameters used in our experiments. CurvZO and CurvZO+LoRA use the same settings as MeZO and MeZO+LoRA, respectively.}
\begin{tabularx}{\columnwidth}{l l >{\raggedright\arraybackslash}X}
\toprule
\textbf{Experiment} & \textbf{Hyperparameters} & \textbf{Values} \\
\midrule

\multirow{3}{*}{FT}
& Batch size & 8 \\
& Learning rate & $\{1\mathrm{e}{-5},\, 5\mathrm{e}{-5}\}$ \\
& LR schedule & Constant \\
\midrule

\multirow{4}{*}{MeZO}
& Batch size & $\{64,\,16\}$ \\
& Learning rate $\eta$ & $\{2\mathrm{e}{-7},\, 5\mathrm{e}{-7}, 1\mathrm{e}{-6}\}$ \\
& $\epsilon$ & $1\mathrm{e}{-3}$ \\
& LR schedule & Constant \\
\midrule

\multirow{4}{*}{MeZO+LoRA}
& Batch size & $\{64,\,16\}$ \\
& Learning rate $\eta$ & $\{1\mathrm{e}{-5},\, 5\mathrm{e}{-5}\}$ \\
& $\epsilon$ & $1\mathrm{e}{-3}$ \\
& LR schedule & Constant \\
\bottomrule
\end{tabularx}
\label{tab:hyperparams}
\end{table}

\section{Experiments Settings}\label{app:exper}
\subsection{Datasets and Evaluation}
We evaluate on SuperGLUE \cite{wang2019superglue}, including RTE \cite{dagan2005pascal}, CB \cite{de2019commitmentbank}, BoolQ \cite{clark2019boolq}, WiC \cite{pilehvar2018wic}, and WSC \cite{levesque2012winograd}. We additionally consider SST-2 \cite{socher2013recursive} and two question answering datasets, SQuAD \cite{rajpurkar2016squad} and DROP \cite{dua2019drop}.

Following prior work \cite{malladi2023fine, gao2020making}, we randomly sample 1{,}000 instances for training, 500 for validation, and 1{,}000 for testing. All experiments are conducted on NVIDIA L40 and A100 GPUs, and results are averaged over three independent runs.
\subsection{Baselines}
We compare primarily against two representative ZO optimization methods: memory-efficient ZO optimization (MeZO) \cite{malladi2023fine} and Divergence-driven ZO optimization (DiZO) \cite{tan2025harmony}. 
MeZO is a foundational approach for ZO-based LLM fine-tuning, but its full-parameter perturbation strategy results in high estimator variance and slow convergence. 
DiZO is a divergence-driven zeroth-order optimizer that performs layer-wise adaptation by projecting and scaling ZO updates per layer to better match first-order update behavior, improving convergence and downstream accuracy.

\subsection{Hyperparameter Setting}
As shown in Table~\ref{tab:hyperparams}, we largely follow the standard MeZO setup, including the dataset splits, batch size, number of epochs, perturbation scale $\epsilon$, and task prompts. We fix the training length to 20{,}000 update steps for both OPT and Llama models.

\section{Ablation experiment}\label{app:abla}
Table~\ref{tab:abla opt-2.7b} reports an ablation study on OPT-2.7B across classification and generation tasks. We evaluate the contribution of the two key components in CurvZO: (i) curvature-guided sparse ZO (i.e., using the curvature scores to sample parameter for perturbation) and (ii) adaptive budget selection.
As a baseline, \textbf{US+AB} uses the same adaptive budget selection as CurvZO but samples perturbed parameters uniformly at random, which yields noticeably worse performance than CurvZO, highlighting the importance of curvature-guided sampling.
We further test \textbf{CurvZO ($B=\rho d$)} variants that use curvature-guided sampling with a fixed perturbation budget.
Across most tasks, \textbf{CurvZO outperforms the fixed-budget variants}, indicating the benefit of adaptive budget selection.
The same trend holds under the LoRA setting: CurvZO+LoRA achieves the best overall average, outperforming both US+AB+LoRA and fixed-budget CurvZO+LoRA variants.
Overall, the results confirm that curvature-guided sampling and adaptive budget selection are complementary, and combining them yields the strongest and most robust gains.

\end{document}


%% file: appendix.tex
\appendix


\section{Proof of Eq. \eqref{eq:per curvature}}\label{sec:app_mixed_moment}




The proof expands $\Delta^2 \approx (\g^T \v)^2$ and exploits the independence of the perturbation components $\v$, i.e., $z_i$ and $m_j$ are independent with $z_i \sim \mathcal{N}(0,1)$ and $m_j \sim \mathrm{Bernoulli}(\pi_j)$. Cross-terms vanish in expectation due to $\E [v_i] = 0$, while the diagonal terms are evaluated using the moments $\E[v_i^4]=3\pi_i$ and $\E[v_j^2 v_i^2]=\pi_j \pi_i$, establishing the identity.

Given the expansion $\Delta = \sum_{j=1}^d g_j v_j + \mathcal{O}(\epsilon^2)$, we compute the expectation of the curvature score $s_i= \Delta^2 v_i^2$ with respect to the perturbation $\v$:
\begin{equation}
\begin{aligned}
&\E_v[\Delta^2 v_i^2] = \\ 
& \E_v \left[ \left( \sum_{j=1}^d g_j v_j \right) \left( \sum_{k=1}^d g_k v_k \right) v_i^2 \right] + \mathcal{O}(\epsilon^2) \\
&= \sum_{j=1}^d \sum_{k=1}^d g_j g_k \E[v_j v_k v_i^2] + \mathcal{O}(\epsilon^2).
\end{aligned}
\label{eq:expansion_app}
\end{equation}

We then evaluate the expectation term $E_{j,k,i} = \E[v_j v_k v_i^2]$.

\vspace{0.5em}
\noindent\textbf{Case 1:} $j \neq k$. 
Since the components of $\v$ are independent and $\E[z_i]=0$ (owing to $z_i \sim \mathcal{N}(0,1)$ and the independent of $m_i$ and $z_i$), we have
\begin{equation}
\E[v_l] = \E[m_l \cdot z_l] = \E[m_l] \cdot \E[z_l] = 0.
\end{equation}
Due to the mutual independence of distinct coordinates, for any $j \neq k$, we have
\begin{equation}
\begin{aligned}
\E[v_j v_k v_i^2] &= \E[v_j] \cdot \E[v_k] \cdot \E[v_i^2] \\
&= 0 \cdot 0 \cdot \E[v_i^2] \\
&= 0, \quad \forall j \neq k.
\end{aligned}
\end{equation}

\vspace{0.5em}
\noindent\textbf{Case 2:} $j = k$. 
The expression simplifies as
\begin{equation}
\sum_{j=1}^d g_j^2 \E[v_j^2 v_i^2].
\end{equation}
We assume that $m$ and $z$ are independent, with $z_i \sim \mathcal{N}(0,1)$ and $m_i \sim \mathrm{Bernoulli}(\pi_i)$. Under this assumption, we have
\begin{equation}
\E[z_i^2]=1, \quad \E[z_i^4]=3.
\end{equation}

\vspace{0.3em}
\noindent\textit{Subcase 2a:} $j \neq i$.
\begin{equation}
\begin{aligned}
\E[v_j^2 v_i^2] &= \E[v_j^2]\E[v_i^2] \\
&= \E[m_j^2]\E[m_i^2]\E[z_j^2]\E[z_i^2] \\
&= \pi_j \cdot \pi_i.
\end{aligned}
\end{equation}

\vspace{0.3em}
\noindent\textit{Subcase 2b:} $j = i$. Since $m_i \in \{0,1\}$, we have $m_i^4=m_i$ and
\begin{equation}
\begin{aligned}
\E[v_i^4] &= \E[m_i^4]\E[z_i^4] \\
&= \E[m_i]\E[z_i^4] \\
&= 3 \cdot \pi_i.
\end{aligned}
\end{equation}

Substituting the results from Subcases 2a and 2b back into $\E_v[\Delta^2 v_i^2]$,

\begin{equation}
\begin{aligned}
\sum_{j=1}^d g_j^2 \E[v_j^2 v_i^2] &= \sum_{j \neq i} g_j^2 (\pi_j \pi_i) + g_i^2 (3\pi_i) \\
&= \pi_i \left( \sum_{j \neq i} \pi_j g_j^2 \right) + 3\pi_i g_i^2.
\end{aligned}
\end{equation}
To recover the full summation form $\sum_{j=1}^d \pi_j g_j^2$, we add and subtract the term for $j=i$ (which is $\pi_i^2 g_i^2$) inside the parenthesis:
\begin{equation}
\begin{aligned}
\E_{\v}[\Delta^2 v_i^2] & = \pi_i \left( \sum_{j \neq i} \pi_j g_j^2  +  \pi_i g_i^2 \right) \\
& -  \pi_i(\pi_i g_i^2) + 3\pi_i g_i^2 + \mathcal{O}(\epsilon^2) \\
&= \pi_i \left( \sum_{j=1}^d \pi_j g_j^2 \right) + (3\pi_i - \pi_i^2) g_i^2 + \mathcal{O}(\epsilon^2) \\
&= \pi_i \sum_{j=1}^d \pi_j g_j^2 + \pi_i (3 - \pi_i) g_i^2 + \mathcal{O}(\epsilon^2).
\end{aligned}
\end{equation}

\section{Variance Analysis of the Curvature Score and its Unbiased Variant}
\label{sec:app_variance_analysis}
We define a unbiased variant of curvature score $s_i$ as:
\begin{equation}
s^\prime_i = \frac{s_i - \pi_i \Delta^2}{\pi_i (3-\pi_i)}.
\end{equation}
To show that $s_i^\prime$ is unbiased, we take expectation over the perturbation randomness (i.e., $\m$ and $\z$), obtaining
\begin{equation}
\begin{aligned}
\E_{\m,\z}[s^\prime_i] = g_i^2  + \mathcal{O}(\epsilon^2).
\end{aligned}
\end{equation}

Then, we analyze the variance of the curvature score $s_i$ and its unbiased variant $s^\prime_i$, yielding the following results:
\begin{itemize}
    \item \textbf{Stability of the $s_i$:} $\text{Var}(s_i)$ vanishes as $\pi_i \to 0$; in particular, $\text{Var}(s_i) = \mathcal{O}(\pi_i)$.
    \item \textbf{Instability of $s^\prime_i$:} $\text{Var}(s^\prime_i)$ diverges as $\pi_i \to 0$; in particular, $\text{Var}(s^\prime_i) = \mathcal{O}(1/\pi_i)$.
\end{itemize}
Consequently, $s_i$ is better suited than $s^\prime_i$ for tracking curvature signals in LLMs. In LLMs, the parameter dimension is enormous and local curvature magnitudes are highly heterogeneous, so a curvature-guided sampling distribution must allocate most of the sampling probability to a tiny subset of high-curvature parameters, leaving the vast majority with extremely small sampling probabilities. In this regime, the low-variance behavior of $s_i$ is essential, whereas the variance of $s_i^\prime$ becomes prohibitively large.

We provide detailed derivations of the variances of the curvature score $s_i$ and its unbiased variant $s^\prime_i$ in the following subsections.
\subsection{Variance of the Curvature Score $s_i$}
We analyze $\text{Var}(s_i)$ via $\text{Var}(s_i) = \mathbb{E}[s_i^2] - (\mathbb{E}[s_i])^2$:
\begin{equation}
\begin{aligned}
\mathbb{E}[s_i^2] &= \mathbb{E}\left[ (\Delta^2 v_i^2)^2 \right] = \mathbb{E}\left[ \Delta^4 v_i^4 \right] \\
&= \mathbb{E}\left[ \left( \sum_{k=1}^d g_k v_k \right)^4 v_i^4 \right].
\end{aligned}
\end{equation}
We isolate the $k=i$ term from the sum, i.e., $\sum_{k=1}^d g_k v_k = g_i v_i + \sum_{k \neq i} g_k v_k$. Then, we have
\begin{equation}
\begin{aligned}
& \mathbb{E}[s_i^2] = C \sum_{k=0}^4 \binom{4}{k} g_i^k \mathbb{E} \left[ v_i^{4+k} \left( \sum_{j \neq i} g_j v_j \right)^{4-k} \right] \\
&= C \sum_{k=0}^4 \binom{4}{k} g_i^k \mathbb{E}[v_i^{4+k}] \cdot \mathbb{E} \left[ \left( \sum_{j \neq i} g_j v_j \right)^{4-k} \right],
\end{aligned}
\end{equation}
where $C$ is a constant.

Recall that $v_i = m_i z_i$, where $m_i \sim \text{Bernoulli}(\pi_i)$ and $z_i \sim \mathcal{N}(0, 1)$. The moment of $v_i$ is given by:
\begin{equation}
\mathbb{E}[v_i^p] = \mathbb{E}[m_i^p] \mathbb{E}[z_i^p] = \pi_i \mathbb{E}[z_i^p].
\end{equation}
We note that the expectation of the cross terms with $j \neq i$ can be bounded by a constant $M$ since they involve only finite moments of bounded (or moment-bounded) random variables. Therefore, we have
\begin{equation}
\begin{aligned}
\mathbb{E}[s_i^2] &= \sum_{k=0}^4 C_k (\pi_i \mathbb{E}[z_i^{4+k}]) \underbrace{\mathbb{E} \left[ \left( \sum_{j \neq i} g_j v_j \right)^{4-k} \right]}_{\text{bounded by } M} \\
&= \pi_i \cdot \mathcal{O}(1) = \mathcal{O}(\pi_i).
\end{aligned}
\end{equation}
Since $\text{Var}(s_i) \leq \mathbb{E}[s_i^2]$, we obtain
\begin{equation}
\text{Var}(s_i) = \mathcal{O}(\pi_i).
\end{equation}
In particular, $\text{Var}(s_i) \to 0$ as $\pi_i \to 0$.

\subsection{Variance of Unbiased Variant $s^\prime_i$}
Let $Y_i \triangleq \Delta^2 (v_i^2 - \pi_i)$. Then, the variance of $Y_i$ is given by:
\begin{equation}
\text{Var}(s^\prime_i) = \frac{1}{\pi_i^2 (3 - \pi_i)^2} \text{Var}(Y_i).
\end{equation}
The second moment of $Y_i$ is
\begin{equation}
\begin{aligned}
&\mathbb{E}[Y_i^2] = \mathbb{E} \left[ \Delta^4 (v_i^2 - \pi_i)^2 \right] \\
&= \pi_i \mathbb{E}[Y_i^2 \mid m_i = 1] + (1 - \pi_i) \mathbb{E}[Y_i^2 \mid m_i = 0].
\end{aligned}
\end{equation}

\textbf{Case (a): $m_i = 1$.}
In this case, $v_i = z_i$ $(v_i^2 - \pi_i)^2$ = $(z_i^2 - \pi_i)^2$. 
\begin{equation}
\begin{aligned}
& \mathbb{E}[Y_i^2 \mid m_i = 1] = \\ & \mathbb{E} \left[ \left( g_i z_i + \sum_{j \neq i} g_j v_j \right)^4 (z_i^2 - \pi_i)^2 \right] \leq M_1.
\end{aligned}
\end{equation}
Since all moments of $z_i$ and $v_j$ are finite, this term is bounded by some constant $M_1$.

\textbf{Case (b): $m_i = 0$.}
In this case, $(v_i^2 - \pi_i)^2 = (0 - \pi_i)^2 = \pi_i^2$.
\begin{equation}
\begin{aligned}
\mathbb{E}[Y_i^2 \mid m_i = 0] &= \mathbb{E} \left[ \left(\sum_{j \neq i} g_j v_j \right)^4 (\pi_i^2) \right] \\
&= \pi_i^2 \cdot \mathbb{E} \left[ \left(\sum_{j \neq i} g_j v_j \right)^4 \right] \leq M_2.
\end{aligned}
\end{equation}
The expectation term is bounded by some constant $M_2$.
Substituting back into the expression for $\mathbb{E}[Y_i^2]$, we have
\begin{equation}
\mathbb{E}[Y_i^2] = \pi_i M_1 + (1-\pi_i) \pi_i^2 M_2 \approx \pi_i M_1.
\end{equation}
\begin{equation}
\begin{aligned}
\text{Var}(s^\prime_i) &\approx \frac{\mathbb{E}[Y_i^2]}{\pi_i^2 (3-\pi_i)^2} \approx \frac{\pi_i M_1 + \mathcal{O}(\pi_i^2)}{\pi_i^2 \cdot 9} \\
&= \mathcal{O}\left( \frac{1}{\pi_i} \right).
\end{aligned}
\end{equation}
Therefore, as $\pi_i \to 0$, $\text{Var}(s^\prime_i) \to \infty$.

\section{Proof of Proposition \ref{pro:unbias}}\label{sec:app proof unbiase}

We prove unbiasedness by Taylor expanding the loss to approximate the finite-difference response $\Delta$. Using the independence across perturbation coordinates and the Horvitz–Thompson correction, we show that the cross terms vanish in expectation, while the diagonal terms recover the true gradient up to an $\mathcal{O}(\epsilon^2)$ bias.

Under the assumption that $\L$ is sufficiently smooth, a Taylor expansion gives:
\begin{equation}
\begin{split}
\mathcal{L}(\omegab + \epsilon \v) = \mathcal{L}(\omegab) &+ \epsilon \nabla \mathcal{L}(\omegab)^T \v \\
&+ \frac{\epsilon^2}{2} \v^T \nabla^2 \mathcal{L}(\omegab) \v + \mathcal{O}(\epsilon^3)
\end{split}
\end{equation}
\begin{equation}
\begin{split}
\mathcal{L}(\omegab - \epsilon \v) = \mathcal{L}(\omegab) &- \epsilon \nabla \mathcal{L}(\omegab)^T \v \\
&+ \frac{\epsilon^2}{2} \v^T \nabla^2 \mathcal{L}(\omegab) \v - \mathcal{O}(\epsilon^3)
\end{split}
\end{equation}
Since $\g = \nabla \mathcal{L}(\omegab)$, subtracting the two equations yields:
\begin{equation}
\mathcal{L}(\omegab + \epsilon \v) - \mathcal{L}(\omegab - \epsilon \v) = 2\epsilon \g^T \v + \mathcal{O}(\epsilon^3).
\end{equation}
Thus,
\begin{equation}
\Delta = \frac{\mathcal{L}(\omegab + \epsilon \v) - \mathcal{L}(\omegab - \epsilon \v)}{2\epsilon} = \g^T \v + \mathcal{O}(\epsilon^2).
\end{equation}
Recall that $\Delta = \sum_{j=1}^d g_j v_j + \mathcal{O}(\epsilon^2)$. Substitute this expression into the definition of the estimator $\tilde{g}_i$:
\begin{equation}
\tilde{g}_i = \left( \sum_{j=1}^d g_j v_j + \mathcal{O}(\epsilon^2) \right) \frac{v_i}{\pi_i}, \quad i=1, \dots, d.
\end{equation}
Taking expectation over the perturbation randomness yields:
\begin{equation}
\mathbb{E}[\tilde{g}_i] = \frac{1}{\pi_i} \sum_{j=1}^d g_j \mathbb{E}[v_j v_i] + \frac{1}{\pi_i} \mathbb{E}[v_i \cdot \mathcal{O}(\epsilon^2)].
\end{equation}
Since $v_k = m_k z_k$, $\mathbb{E}[\tilde{g}_i]$ becomes
\begin{equation}
\begin{aligned}
\mathbb{E}[\tilde{g}_i] &= \frac{1}{\pi_i} \sum_{j=1}^d g_j \mathbb{E}[m_j m_i] \mathbb{E}[z_j z_i] + \frac{1}{\pi_i} \mathbb{E}[v_i \cdot \mathcal{O}(\epsilon^2)] \\
&= \frac{1}{\pi_i} \sum_{j=1}^d g_j \mathbb{E}[m_j m_i] \delta_{ij} + \mathcal{O}(\epsilon^2).
\end{aligned}
\end{equation}
Using the property that terms with $j \neq i$ vanish (since $\delta_{ij}=0$ for $j \neq i$), we have
\begin{equation}
\begin{aligned}
\mathbb{E}[\tilde{g}_i] &= \frac{1}{\pi_i} g_i \mathbb{E}[m_i^2] \cdot 1 + \mathcal{O}(\epsilon^2) \\
&= \frac{1}{\pi_i} g_i \pi_i + \mathcal{O}(\epsilon^2) \\
&= g_i + \mathcal{O}(\epsilon^2).
\end{aligned}
\end{equation}
Finally, aggregating over all coordinates gives $\mathbb{E}[\tilde{\g}] = \g + \mathcal{O}(\epsilon^2)$.

\section{Proof of Proposition \ref{pro:variance}}\label{sec:app_variance}
By definition of unbiased gradient estimator $\tilde g_i=(\Delta v_i)/\pi_i$, we have $\mathbb{E}[\tilde g_i^2]=\mathbb{E}[\Delta^2 v_i^2] / \pi_i^2$. 
Substituting the Taylor expansion $\Delta=\g^\top \v+\mathcal{O}(\epsilon^2)$ yields $\mathbb{E}[\tilde g_i^2]=\mathcal{O}(F_{ii}/\pi_i)$ where $F_{ii}$ denotes the diagonal Fisher information matrix. 

Since $\tilde{g}_i = (\Delta m_i z_i) / \pi_i$ and $v_i = m_i z_i$, we have
\begin{equation}
    \mathbb{E}[\tilde{g}_i^2]
    = \frac{1}{\pi_i^2} \mathbb{E}[\Delta^2 v_i^2].
\end{equation}
As shown in Eq. \eqref{eq:per curvature}, we have
\begin{equation}
    \mathbb{E}_{\v}[\Delta^2 v_i^2]
    = 3\pi_i g_i^2 + \pi_i \sum_{j\neq i} \pi_j g_j^2,
\end{equation}
where we omit the $\mathcal{O}(\epsilon^2)$ term in Eq. \eqref{eq:per curvature} for simplicity.
Therefore,
\begin{equation}
\begin{aligned}
    \mathbb{E}[\tilde{g}_i^2]
    &= \frac{1}{\pi_i}
       \left(3 g_i^2 + \sum_{j\neq i} \pi_j g_j^2\right).
\end{aligned}
\end{equation}
Taking expectations and denoting $F_{ii} = \E[g_i^2]$ yields
\begin{equation}
    \mathbb{E}[\tilde{g}_i^2]
    = \frac{3}{\pi_i} F_{ii}
      + \frac{1}{\pi_i} \sum_{j\neq i} \pi_j F_{jj}.
    \label{eq:Eg2}
\end{equation}
It is convenient to decompose \eqref{eq:Eg2} into a term that depends strongly on $i$ and a term whose dependence on $i$ is weak.
The first term $\frac{3}{\pi_i} F_{ii}$ depends explicitly on the local curvature $F_{ii}$ and is the primary quantity we seek to control through the choice of $\pi_i$. 
The second term $\frac{1}{\pi_i} \sum_{j\neq i} \pi_j F_{jj}$ is dominated by the global aggregate $\sum_j \pi_j F_{jj}$ and varies only mildly with $i$ (in particular when $d$ is large and no single coordinate dominates the sum). Therefore, we adopt the scaling
\begin{equation}
    \mathbb{E}[\tilde{g}_i^2]
    = \frac{C_0 + C_1 F_{ii}}{\pi_i},
    \label{eq:Ei2-C0C1}
\end{equation}
where $C_1 = 3$ and $C_0 \approx \sum_j \pi_j F_{jj}$ is (up to a negligible dependence on $i$) independent of the index $i$.

Because the sampling probabilities $\pi_i$ are design variables under our control for all coordinates $i$, and the global term $C_0$ affects all coordinates in a similar way, the decomposition in \eqref{eq:Ei2-C0C1} implies the following variance scaling:
\begin{equation}
    \mathbb{E}[\tilde{g}_i^2]
    = \mathcal{O}\!\Big(\frac{F_{ii}}{\pi_i}\Big),
\end{equation}
up to a coordinate-independent multiplicative constant. Consequently
\begin{equation}
    \mathrm{Var}(\tilde{g}_i)
    = \mathbb{E}[\tilde{g}_i^2] - g_i^2
    = \mathcal{O}\!\Big(\frac{F_{ii}}{\pi_i}\Big),
\end{equation}
since the subtracted term $g_i^2$ does not scale with $1/\pi_i$ and is therefore lower order than the leading term $F_{ii}/\pi_i$ in the regime of interest.

\section{Proof of Proposition \ref{pro:sample_dis}}\label{sec:app var-opt dis}
We solve Problem (P1) via a Lagrangian approach, which yields the optimal sampling rule $\pi_i^\star \propto \sqrt{F_{ii}}$.

Introducing the Lagrangian
\begin{equation}
    \mathcal{J}(\boldsymbol{\pi},\lambda)
    =
    \sum_{i=1}^d \frac{F_{ii}}{\pi_i}
    + \lambda\!\left(\sum_{i=1}^d \pi_i - B\right),
\end{equation}
and setting the derivative with respect to each $\pi_i$ to zero yields
\begin{equation}
    \frac{\partial \mathcal{L}}{\partial \pi_i}
    =
    -\frac{F_{ii}}{\pi_i^2} + \lambda = 0
    \;\;\Longrightarrow\;\;
    \pi_i^2 = \frac{F_{ii}}{\lambda}.
\end{equation}
Therefore, $\pi_i^\star \propto \sqrt{F_{ii}}$.

\section{Proof of Lemma \ref{lemma:bound smoo gra}.}\label{sec:app_bound smoo}
Using the norm inequality induced by Young's inequality, $\|\boldsymbol{a}+\boldsymbol{b}\|^2 \le 2\|\boldsymbol{a}\|^2 + 2\|\boldsymbol{b}\|^2$, we obtain
\begin{equation}
    \|\nabla \mathcal{L}(\omegab)\|^2 \le 2 \|\nabla \mathcal{L}_\pi(\omegab)\|^2 + 2 \|\nabla \mathcal{L}(\omegab) - \nabla \mathcal{L}_\pi(\omegab)\|^2.
\end{equation}
Let Bias $= \|\nabla \mathcal{L}(\omegab) - \nabla \mathcal{L}_\pi(\omegab)\|$. Recall $\nabla \mathcal{L}_\pi(\omegab) = \E_{\m,\z} [\nabla \mathcal{L}(\omegab + \epsilon \v)]$. By Jensen's inequality and Lipschitz continuity:
\begin{equation}
\begin{aligned}
    \text{Bias} &= \| \E_{\m,\z} [\nabla \mathcal{L}(\omegab) - \nabla \mathcal{L}(\omegab + \epsilon \v)] \| \\
    &\le \E_{\m,\z} [\| \nabla \mathcal{L}(\omegab) - \nabla \mathcal{L}(\omegab + \epsilon \v) \|] \\
    &\le \E_{\m,\z} [L \|\epsilon \v\|] = L \epsilon \E[\| \v \|].
\end{aligned}
\end{equation}
Since $\v = \m \odot \z$, we have $\E[\| \v \|] \le \sqrt{\E[\| \v \|^2]} \approx \sqrt{d}$. Thus, $\text{Bias}^2 \le L^2 \epsilon^2 d = \mathcal{O}(d \epsilon^2)$.
Substituting this back into the above expression yields
\begin{equation}
    \|\nabla \mathcal{L}(\omegab)\|^2 \le 2 \|\nabla \mathcal{L}_\pi(\omegab)\|^2 + \mathcal{O}(d \epsilon^2).
\end{equation}

\section{Proof of Theorem \ref{theo:convergence}}\label{sec:app_convergence}

\paragraph{Properties of Smoothed Loss $\mathcal{L}_\pi$.}
Recall that in ZO optimization, we optimize the following smoothed objective:
\begin{equation}
    \mathcal{L}_\pi(\omegab) = \E_{\m,\z} [\mathcal{L}(\omegab + \epsilon \v)],
\end{equation}
where $\v = \m \odot \z$ denotes the sparse perturbation. 

Under Assumption \ref{assu L}, the smoothed loss function $\mathcal{L}_\pi(\omegab)$ is also $L$-smooth.

Since differentiation and expectation can be interchanged under standard regularity conditions, the gradient of the smoothed loss satisfies $\nabla \mathcal{L}_\pi(\omegab) = \E_{\m,\z} [\nabla \mathcal{L}(\omegab + \epsilon \v)]$.
For any $\omegab_1, \omegab_2$, we have:
\begin{equation}
\begin{aligned}
    & \| \nabla \mathcal{L}_\pi(\omegab_1) - \nabla \mathcal{L}_\pi(\omegab_2) \| \\
    &\quad = \| \E_{\m,\z} [\nabla \mathcal{L}(\omegab_1 + \epsilon \v) - \nabla \mathcal{L}(\omegab_2 + \epsilon \v)] \| \\
    &\quad \le \E_{\m,\z} [\| \nabla \mathcal{L}(\omegab_1 + \epsilon \v) - \nabla \mathcal{L}(\omegab_2 + \epsilon \v) \|] \\
    &\quad \le \E_{\m,\z} [L \| (\omegab_1 + \epsilon \v) - (\omegab_2 + \epsilon \v) \|] \\
    &\quad = L \|\omegab_1 - \omegab_2\|.
\end{aligned}
\end{equation}
Thus, $\mathcal{L}_\pi$ is $L$-smooth.

\begin{lemma}[Approximate Unbiasedness]
Under Assumption \ref{assu Hessian L}, the sparse estimator $\tilde{\g}$ is unbiased for the smoothed gradient $\nabla \mathcal{L}_\pi(\omegab)$ up to $\mathcal{O}(\epsilon^2)$ terms:
\begin{equation}
    \E_{\m,\z}[\tilde{\g}] = \nabla \mathcal{L}_\pi(\omegab) + \mathcal{O}(\epsilon^2).
\end{equation}
\end{lemma}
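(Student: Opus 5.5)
We show that both $\E_{\m,\z}[\tilde{\g}]$ and $\nabla \L_\pi(\omegab)$ equal $\g=\nabla\L(\omegab)$ up to $\mathcal{O}(\epsilon^2)$, and then combine the two by the triangle inequality.

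\textbf{Step 1: the estimator side.} Proposition~\ref{pro:unbias} already gives $\E_{\m,\z}[\tilde{\g}] = \g + \mathcal{O}(\epsilon^2)$. To make the remainder explicit under Assumption~\ref{assu Hessian L}, we use the third-order Taylor expansion with integral remainder. The even-order terms cancel in the central difference, so
\begin{equation}
\Delta = \g^\top \v + R(\v), \qquad |R(\v)| \le \tfrac{\rho}{6}\,\epsilon^2 \|\v\|^3 .
\end{equation}
Then $\E[\tilde g_i] = g_i + \pi_i^{-1}\E[R(\v) v_i]$. The correction term is bounded by $\tfrac{\rho\epsilon^2}{6\pi_i}\E[\|\v\|^3 |v_i|]$, which is finite because all Gaussian moments are finite. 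Hence it is $\mathcal{O}(\epsilon^2)$, with a constant depending on $d$ and $\min_i\pi_i$; we treat these as fixed, as the paper does throughout.

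\textbf{Step 2: the smoothed-gradient side.} We write $\nabla\L_\pi(\omegab)=\E[\nabla\L(\omegab+\epsilon\v)]$, justified by interchanging differentiation and expectation as in the $L$-smoothness argument above. Taylor-expanding the gradient gives
\begin{equation}
\nabla\L(\omegab+\epsilon\v) = \g + \epsilon\nabla^2\L(\omegab)\v + r(\v), \qquad \|r(\v)\|\le \tfrac{\rho}{2}\epsilon^2\|\v\|^2 .
\end{equation}
Since $\E[\v]=\E[\m]\odot\E[\z]=\boldsymbol{0}$ by independence of $\m$ and $\z$, the linear term vanishes in expectation. Therefore $\|\nabla\L_\pi(\omegab)-\g\| \le \tfrac{\rho}{2}\epsilon^2\E\|\v\|^2 = \tfrac{\rho}{2}\epsilon^2 B$, where we used $\E\|\v\|^2=\sum_i\pi_i = B \le d$.

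\textbf{Step 3: combine.} Subtracting the two expansions yields $\E[\tilde{\g}] - \nabla\L_\pi(\omegab) = \mathcal{O}(\epsilon^2)$, which is the claim.

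The main obstacle is Step 1. The inverse-probability weight $1/\pi_i$ multiplies the remainder, so the $\mathcal{O}(\epsilon^2)$ constant is not uniform as $\pi_i\to 0$.

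To control it, we condition on $m_i=1$. Because $v_i=0$ whenever $m_i=0$, we have $\E[R(\v)v_i] = \pi_i\,\E[R(\v)z_i \mid m_i=1]$. This exactly cancels the $1/\pi_i$, leaving a bound $\tfrac{\rho}{6}\epsilon^2\E[\|\v\|^3|z_i| \mid m_i=1]$ that is uniform in $\pi_i$ and of order $\epsilon^2 d^{3/2}$.
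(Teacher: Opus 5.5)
Your proposal is correct and follows the same route as the paper: you pass through $\nabla\L(\omegab)$, using Proposition~\ref{pro:unbias} for the estimator side and a Taylor expansion of $\nabla\L(\omegab+\epsilon\v)$ with $\E[\v]=\boldsymbol{0}$ for the smoothed side, and then subtract the two. Your explicit $\rho$-dependent remainders, and the conditioning on $m_i=1$ that cancels the $1/\pi_i$ weight and makes the $\mathcal{O}(\epsilon^2)$ constant uniform in $\pi_i$, tighten details the paper leaves implicit rather than change the argument.
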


\begin{proof}
Recall that Proposition~\ref{pro:unbias} establishes that the estimator recovers the true gradient up to a second-order bias:
\begin{equation}
    \E_{\m,\z}[\tilde{\g}] = \nabla \mathcal{L}(\omegab) + \mathcal{O}(\epsilon^2).
    \label{eq:step1_bias}
\end{equation}
We next evaluate the gradient of the smoothed loss $\nabla \mathcal{L}_\pi(\omegab) = \E_{\m,\z}[\nabla \mathcal{L}(\omegab + \epsilon \v)]$. Under Assumption \ref{assu Hessian L}, we apply a Taylor expansion to $\nabla \mathcal{L}(\omegab + \epsilon \v)$ at $\omegab$:
\begin{equation}
    \nabla \mathcal{L}(\omegab + \epsilon \v) = \nabla \mathcal{L}(\omegab) + \epsilon \nabla^2 \mathcal{L}(\omegab) \v^\top + \mathcal{O}(\epsilon^2 \| \v \|^2).
\end{equation}
Taking the expectation over $\v = \m \odot \z$, we have
\begin{equation}
\begin{aligned}
    \nabla \mathcal{L}_\pi(\omegab) &= \E_{\m,\z} [\nabla \mathcal{L}(\omegab)] + \E_{\m,\z} [\epsilon \nabla^2 \mathcal{L}(\omegab) \v^\top] \\
    &\quad + \E_{\m,\z}[\mathcal{O}(\epsilon^2 \| \v \|^2)] \\
    &= \nabla \mathcal{L}(\omegab) + \epsilon \nabla^2 \mathcal{L}(\omegab) \E[\v^\top] + \mathcal{O}(\epsilon^2) \\
    &= \nabla \mathcal{L}(\omegab) + \mathcal{O}(\epsilon^2).
\end{aligned}
    \label{eq:step2_smooth}
\end{equation}
Note that $\E[\v^\top]=\boldsymbol{0}^\top$ follows from the independence of $\m$ and the symmetry of $\z \sim \mathcal{N}(0, \boldsymbol{I}_d)$.

Combining Eq. \eqref{eq:step1_bias} and Eq. \eqref{eq:step2_smooth}, we subtract the two expressions to obtain
\begin{equation}
\begin{aligned}
    \E[\tilde{\g}] \! - \! \nabla \mathcal{L}_\pi(\omegab) &= (\E[\tilde{\g}] \! - \! \nabla \mathcal{L}(\omegab)) \! - \! (\nabla \mathcal{L}_\pi(\omegab) \! - \! \nabla \mathcal{L}(\omegab)) \\
    &= \mathcal{O}(\epsilon^2) - \mathcal{O}(\epsilon^2) \\
    &= \mathcal{O}(\epsilon^2).
\end{aligned}
\end{equation}
Therefore,  $\E_{\m,\z}[\tilde{\g}] = \nabla \mathcal{L}_\pi(\omegab) + \mathcal{O}(\epsilon^2)$.
\end{proof}

\paragraph{Proof of Theorem \ref{theo:convergence}.}
Let $\omegab_t$ denote the parameters produced by CurvZO with a fixed learning rate $\eta = \frac{1}{3L}$. Let $F_{ii}(\omegab_t)$ denote the $i$-th diagonal entry of the Fisher information matrix evaluated at $\omegab_t$.

The proof proceeds by analyzing descent on the smoothed objective and then bounding the discrepancy between the smoothed and original landscapes. We denote $\tilde{\g}^t \triangleq \tilde{\g}(\omegab_t)$ as the sparse ZO gradient computed at iteration $t$.

\noindent\textbf{Step 1: One-step Descent on Smoothed Loss.} \\
By the $L$-smoothness of $\L_\pi$, we have:
\begin{equation}
\begin{aligned}
    \mathcal{L}_\pi(\omegab_{t+1}) &\le \mathcal{L}_\pi(\omegab_t) + \langle \nabla \mathcal{L}_\pi(\omegab_t), \omegab_{t+1} - \omegab_t \rangle \\
    &\quad + \frac{L}{2} \|\omegab_{t+1} - \omegab_t\|^2.
\end{aligned}
\end{equation}
Substituting the update rule $\omegab_{t+1} = \omegab_t - \eta \tilde{\g}^t$ yields
\begin{equation}
    \mathcal{L}_\pi(\omegab_{t+1}) \le \mathcal{L}_\pi(\omegab_t) - \eta \langle \nabla \mathcal{L}_\pi(\omegab_t), \tilde{\g}^t \rangle + \frac{L \eta^2}{2} \|\tilde{\g}^t\|^2.
\end{equation}

We take the conditional expectation with respect to $\omegab_t$, denoted as $\E_{\omegab_t}[\cdot]$. Using the unbiasedness property $\E_{\omegab_t}[\tilde{\g}^t] = \nabla \mathcal{L}_\pi(\omegab_t) + \mathcal{O}(\epsilon^2)$:
\begin{equation}
\begin{aligned}
    \E_{\omegab_t} [\mathcal{L}_\pi & (\omegab_{t+1})]  \le \mathcal{L}_\pi(\omegab_t) - \eta \langle \nabla \mathcal{L}_\pi(\omegab_t), \E_{\omegab_t}[\tilde{\g}^t] \rangle \\
    & \quad +  \frac{L \eta^2}{2} \E_{\omegab_t} [\|\tilde{\g}^t\|^2] \\
    &= \mathcal{L}_\pi(\omegab_t) - \eta \langle \nabla \mathcal{L}_\pi(\omegab_t), \nabla \mathcal{L}_\pi(\omegab_t) + \mathcal{O}(\epsilon^2) \rangle \\
    & \quad + \frac{L \eta^2}{2} \E_{\omegab_t} [\|\tilde{\g}^t\|^2].
\end{aligned}
\end{equation}
To handle the cross term $\langle \nabla \mathcal{L}_\pi(\omegab_t), \mathcal{O}(\epsilon^2) \rangle$, we use the inequality $ab \le \frac{a^2}{2} + \frac{b^2}{2}$. Let $a = \sqrt{\eta} \|\nabla \mathcal{L}_\pi(\omegab_t)\|$ and $b = \sqrt{\eta} \mathcal{O}(\epsilon^2)$. Then, 
\begin{equation}
    \eta \langle \nabla \mathcal{L}_\pi(\omegab_t), \mathcal{O}(\epsilon^2) \rangle \le \frac{\eta}{2} \|\nabla \mathcal{L}_\pi(\omegab_t)\|^2 + \frac{\eta}{2} \mathcal{O}(\epsilon^4).
\end{equation}
Substituting this back yields
\begin{equation}
\begin{aligned}
    \E_{\omegab_t} [\mathcal{L}_\pi(\omegab_{t+1})] & \!\le \! \mathcal{L}_\pi(\omegab_t) \! - \! \eta \|\nabla \mathcal{L}_\pi(\omegab_t)\|^2 \! + \! \frac{\eta}{2} \|\nabla \mathcal{L}_\pi(\omegab_t)\|^2 \\
    &\quad + \frac{\eta}{2} \mathcal{O}(\epsilon^4) + \frac{L \eta^2}{2} \E_{\omegab_t} [\|\tilde{\g}^t\|^2] \\
    &= \mathcal{L}_\pi(\omegab_t) - \frac{\eta}{2} \|\nabla \mathcal{L}_\pi(\omegab_t)\|^2 + \eta \mathcal{O}(\epsilon^4) \\
    &\quad + \frac{L \eta^2}{2} \E_{\omegab_t} [\|\tilde{\g}^t\|^2].
\end{aligned}
    \label{eq:descent_step1}
\end{equation}

\noindent\textbf{Step 2: Bounding the Second Moment $\E [\|\tilde{\g}^t\|^2]$.} \\
From the variance bound in Eq. \eqref{eq:var-upper-bound}, we have $\E [(\tilde{\g}^t_i)^2] = \mathcal{O}(\frac{F_{ii}}{\pi_i})$. Summing over coordinates yields
\begin{equation}
    \E_{\omegab_t} [\|\tilde{\g}^t\|^2] 
    = \sum_{i=1}^d \E [(\tilde{\g}^t_i)^2] 
    = \sum_{i=1}^d \mathcal{O}\left( \frac{F_{ii}}{\pi_i^t} \right).
\end{equation}
Under the budget constraint $\sum_{i} \pi_i^t = B$, Proposition \ref{pro:sample_dis} shows that the variance is minimized when $\pi_i \propto \sqrt{F_{ii}}$. Substituting this optimal sampling distribution, we obtain
\begin{equation}
\begin{aligned}
    \sum_{i=1}^d \frac{F_{ii}}{\pi_i} &= \sum_{j=1}^d \left( \frac{F_{jj}}{B \frac{\sqrt{F_{jj}}}{\sum_k \sqrt{F_{kk}}}} \right) \\
    &= \frac{1}{B} \left( \sum_{k=1}^d \sqrt{F_{kk}} \right) \left( \sum_{j=1}^d \sqrt{F_{jj}} \right) \\
    &= \frac{1}{B} \left( \sum_{i=1}^d \sqrt{F_{ii}} \right)^2.
\end{aligned}
\end{equation}
Therefore, $\E [\|\tilde{\g}^t\|^2] \le \frac{C}{B} \left( \sum_{i=1}^d \sqrt{F_{ii}} \right)^2$. Using the assumption $\sum \sqrt{F_{ii}} \le M$, we have
\begin{equation}
    \E_{\omegab_t} [\|\tilde{\g}^t\|^2] \le \frac{C M^2}{B}.
\end{equation}
Substituting this bound into Eq. \eqref{eq:descent_step1} yields
\begin{equation}
\begin{aligned}
    \E_{\omegab_t} [\mathcal{L}_\pi(\omegab_{t+1})] &\le \mathcal{L}_\pi(\omegab_t) - \frac{\eta}{2} \|\nabla \mathcal{L}_\pi(\omegab_t)\|^2 \\
    &\quad + \eta \mathcal{O}(\epsilon^4) + \frac{L \eta^2 C M^2}{2 B}.
\end{aligned}
\end{equation}
Taking the total expectation $\E[\cdot]$ on both sides, we have
\begin{equation}
\begin{aligned}
    \frac{\eta}{2} \E [\|\nabla \mathcal{L}_\pi(\omegab_t)\|^2] &\le \E [\mathcal{L}_\pi(\omegab_t)] - \E [\mathcal{L}_\pi(\omegab_{t+1})] \\
    &\quad + \eta \mathcal{O}(\epsilon^4) + \frac{L \eta^2 C M^2}{2 B}.
\end{aligned}
\end{equation}

\noindent\textbf{Step 3: Telescoping Sum.} \\
Summing the above inequality from $t=0$ to $T$ yields
\begin{equation}
\begin{aligned}
    \frac{\eta}{2} \sum_{t=0}^T \E [\|\nabla \mathcal{L}_\pi & (\omegab_t)\|^2] \! \le \! \sum_{t=0}^T \big( \E [\mathcal{L}_\pi(\omegab_t)] \! - \! \E [\mathcal{L}_\pi(\omegab_{t+1})] \big) \\
    &\quad + (T+1) \left[ \eta \mathcal{O}(\epsilon^4) + \frac{L \eta^2 C M^2}{2 B} \right].
\end{aligned}
\end{equation}
The sum on the right-hand side telescopes to $\mathcal{L}_\pi(\omegab_0) - \E[\mathcal{L}_\pi(\omegab_{T+1})]$. Assuming $\mathcal{L}_\pi$ is lower bounded by $\mathcal{L}_\pi^*$, we obtain
\begin{equation}
\begin{aligned}
    \frac{\eta}{2} \sum_{t=0}^T & \E [\|\nabla \mathcal{L}_\pi(\omegab_t)\|^2] \le \mathcal{L}_\pi(\omegab_0) - \mathcal{L}_\pi^* \\
    &\quad + (T+1)\eta \mathcal{O}(\epsilon^4)  + \frac{L \eta^2 C M^2 (T+1)}{2 B}.
\end{aligned}
\end{equation}
Then, dividing by $\frac{T+1}{2} \eta$ yields
\begin{equation}
\begin{aligned}
    \frac{1}{T+1} \sum_{t=0}^T \E [\|\nabla \mathcal{L}_\pi & (\omegab_t)\|^2] \le \frac{2(\mathcal{L}_\pi(\omegab_0) - \mathcal{L}_\pi^*)}{(T+1)\eta} \\
    &\quad + 2\mathcal{O}(\epsilon^4) + \frac{L \eta C M^2}{B}.
\end{aligned}
    \label{eq:smooth_convergence}
\end{equation}

\noindent\textbf{Step 4: From Smoothed Gradient to True Gradient.} \\

Applying Lemma~\ref{lemma:bound smoo gra} to the time-averaged bound yields
\begin{equation}
\begin{aligned}
    \min_{0 \le t \le T} & \E [\|\nabla \mathcal{L}(\omegab_t)\|^2] \le \frac{1}{T+1} \sum_{t=0}^T \E [\|\nabla \mathcal{L}(\omegab_t)\|^2] \\
    &\le 2 \left( \frac{1}{T+1} \sum_{t=0}^T \E [\|\nabla \mathcal{L}_\pi(\omegab_t)\|^2] \right) + \mathcal{O}(d \epsilon^2).
\end{aligned}
\end{equation}
Combining with Eq. \eqref{eq:smooth_convergence}, we have
\begin{equation}
\begin{aligned}
    \min_{0 \le t \le T} \E [\|\nabla \mathcal{L}(\omegab_t)\|^2] &\le \frac{4(\mathcal{L}_0 - \mathcal{L}^*)}{(T+1)\eta} + \frac{2 L \eta C M^2}{B} \\
    &\quad + 4\mathcal{O}(\epsilon^4) + \mathcal{O}(d \epsilon^2).
\end{aligned}
\end{equation}
Finally, substituting the learning rate $\eta = \frac{1}{3L}$, we obtain
\begin{equation}
\begin{aligned}
    \min_{0 \le t \le T} \E [\|\nabla \mathcal{L}(\omegab_t)\|^2] &\le \frac{12 L (\mathcal{L}_0 - \mathcal{L}^*)}{T+1} + \frac{2 C M^2}{3 B} \\
    &\quad + \mathcal{O}(d \epsilon^2).
\end{aligned}
\end{equation}

\section{Proof Sketches for Propositions~\ref{pro:block-wise consist} and~\ref{pro:block-wise sampl dis}}\label{sec:app block-wise}
For unbiasedness, by the definition of block-wise gradient estimator $\tilde \g_{\G_i}$, we have
\begin{equation}
\E[\tilde \g_{\G_i}] = \frac{1}{\pi^\mathrm{blk}_i} \E\big[\Delta\, m^\mathrm{blk}_i \z_{\G_i}\big].
\end{equation}

By independence across blocks, for the $i$-th block, we have
\begin{equation}
\begin{aligned}
& \mathbb{E}\big[\Delta\, m^\mathrm{blk}_i \z_{\mathcal{G}_i}\big]  = \mathbb{E}\big[\langle \g_{\mathcal{G}_i}, m^\mathrm{blk}_i \z_{\mathcal{G}_i}\rangle m^\mathrm{blk}_i \z_{\mathcal{G}_i}\big] \\
& = \mathbb{E}[m^\mathrm{blk}_i]\, \mathbb{E}\big[\langle \g_{\mathcal{G}_i}, \z_{\mathcal{G}_i}\rangle \z_{\mathcal{G}_i}\big] = \pi^\mathrm{blk}_i\, \g_{\mathcal{G}_i},
\end{aligned}
\end{equation}
where we used $(m_i^{\mathrm{blk}})^2=m_i^\mathrm{blk}$ and $\mathbb{E}[\z_{\mathcal{G}_i} \z_{\mathcal{G}_i}^\top] = \boldsymbol{I}$.
Thus $\mathbb{E}[\tilde \g_{\mathcal{G}_i}] = \g_{\mathcal{G}_i}$.

For the variance, we consider the second moment
$\mathbb{E}\big[\|\tilde \g_{\mathcal{G}_i}\|_2^2\big]$.
By definition,
\begin{equation}
\begin{aligned}
\|\tilde \g_{\mathcal{G}_i}\|_2^2 & = \frac{\Delta^2}{(\pi_i^\mathrm{blk})^2}\, (m_i^\mathrm{blk})^2 \|\z_{\mathcal{G}_i}\|_2^2 \\ &=
\frac{\Delta^2}{(\pi_i^\mathrm{blk})^2}\, m^\mathrm{blk}_i \|\z_{\mathcal{G}_i}\|_2^2.
\end{aligned}
\end{equation}
Taking expectation and using independence across blocks, the mixed moments
of $\Delta$ and $m^\mathrm{blk}_i \z_{\mathcal{G}_i}$ decompose exactly as in the
coordinate-wise analysis, with the scalar Fisher diagonal $F_{jj}$ replaced
by the block trace $F_i^{\mathrm{blk}}$.
This yields
\begin{equation}
\mathbb{E}\big[\|\tilde \g_{\mathcal{G}_i}\|_2^2\big] \leq C \frac{F_i^{\mathrm{blk}}}{\pi^\mathrm{blk}_i},
\end{equation}
where $C$ is constant independent of $i$, and hence
$\mathrm{Var}(\tilde \g_{\mathcal{G}_i}) \leq C F_i^{\mathrm{blk}}/\pi^\mathrm{blk}_i$.
Applying the same Lagrange multiplier argument as in the per-parameter case
to minimize $\sum_i F_i^{\mathrm{blk}}/\pi^\mathrm{blk}_i$ under $\sum_i \pi^\mathrm{blk}_i = B$
directly gives $\pi_i^{\mathrm{blk} \star} \propto \sqrt{F_i^{\mathrm{blk}}}$.

%% file: example_paper.bib
@article{hoffmann2022empirical,
  title={An empirical analysis of compute-optimal large language model training},
  author={Hoffmann, Jordan and Borgeaud, Sebastian and Mensch, Arthur and Buchatskaya, Elena and Cai, Trevor and Rutherford, Eliza and de Las Casas, Diego and Hendricks, Lisa Anne and Welbl, Johannes and Clark, Aidan and others},
  journal={Advances in neural information processing systems},
  volume={35},
  pages={30016--30030},
  year={2022}
}

@article{kaplan2020scaling,
  title={Scaling laws for neural language models},
  author={Kaplan, Jared and McCandlish, Sam and Henighan, Tom and Brown, Tom B and Chess, Benjamin and Child, Rewon and Gray, Scott and Radford, Alec and Wu, Jeffrey and Amodei, Dario},
  journal={arXiv preprint arXiv:2001.08361},
  year={2020}
}

@article{gholami2024ai,
  title={Ai and memory wall},
  author={Gholami, Amir and Yao, Zhewei and Kim, Sehoon and Hooper, Coleman and Mahoney, Michael W and Keutzer, Kurt},
  journal={IEEE Micro},
  volume={44},
  number={3},
  pages={33--39},
  year={2024},
  publisher={IEEE}
}

@inproceedings{zeng2024flightllm,
  title={Flightllm: Efficient large language model inference with a complete mapping flow on fpgas},
  author={Zeng, Shulin and Liu, Jun and Dai, Guohao and Yang, Xinhao and Fu, Tianyu and Wang, Hongyi and Ma, Wenheng and Sun, Hanbo and Li, Shiyao and Huang, Zixiao and others},
  booktitle={Proceedings of the 2024 ACM/SIGDA International Symposium on Field Programmable Gate Arrays},
  pages={223--234},
  year={2024}
}

@article{hur2023fast,
  title={A fast and flexible FPGA-based accelerator for natural language processing neural networks},
  author={Hur, Suyeon and Na, Seongmin and Kwon, Dongup and Kim, Joonsung and Boutros, Andrew and Nurvitadhi, Eriko and Kim, Jangwoo},
  journal={ACM Transactions on Architecture and Code Optimization},
  volume={20},
  number={1},
  pages={1--24},
  year={2023},
  publisher={ACM New York, NY}
}

@article{tan2025harmony,
  title={Harmony in divergence: Towards fast, accurate, and memory-efficient zeroth-order llm fine-tuning},
  author={Tan, Qitao and Liu, Jun and Zhan, Zheng and Ding, Caiwei and Wang, Yanzhi and Ma, Xiaolong and Lee, Jaewoo and Lu, Jin and Yuan, Geng},
  journal={arXiv preprint arXiv:2502.03304},
  year={2025}
}

@article{liu2024sparse,
  title={Sparse mezo: Less parameters for better performance in zeroth-order llm fine-tuning},
  author={Liu, Yong and Zhu, Zirui and Gong, Chaoyu and Cheng, Minhao and Hsieh, Cho-Jui and You, Yang},
  journal={arXiv preprint arXiv:2402.15751},
  year={2024}
}

@article{malladi2023fine,
  title={Fine-tuning language models with just forward passes},
  author={Malladi, Sadhika and Gao, Tianyu and Nichani, Eshaan and Damian, Alex and Lee, Jason D and Chen, Danqi and Arora, Sanjeev},
  journal={Advances in Neural Information Processing Systems},
  volume={36},
  pages={53038--53075},
  year={2023}
}

@inproceedings{guozeroth,
  title={Zeroth-Order Fine-Tuning of LLMs with Transferable Static Sparsity},
  author={Guo, Wentao and Long, Jikai and Zeng, Yimeng and Liu, Zirui and Yang, Xinyu and Ran, Yide and Gardner, Jacob R and Bastani, Osbert and De Sa, Christopher and Yu, Xiaodong and others},
  booktitle={The Thirteenth International Conference on Learning Representations}
}

@article{zhao2024second,
  title={Second-order fine-tuning without pain for llms: A hessian informed zeroth-order optimizer},
  author={Zhao, Yanjun and Dang, Sizhe and Ye, Haishan and Dai, Guang and Qian, Yi and Tsang, Ivor W},
  journal={arXiv preprint arXiv:2402.15173},
  year={2024}
}

@article{zhang2024revisiting,
  title={Revisiting zeroth-order optimization for memory-efficient llm fine-tuning: A benchmark},
  author={Zhang, Yihua and Li, Pingzhi and Hong, Junyuan and Li, Jiaxiang and Zhang, Yimeng and Zheng, Wenqing and Chen, Pin-Yu and Lee, Jason D and Yin, Wotao and Hong, Mingyi and others},
  journal={arXiv preprint arXiv:2402.11592},
  year={2024}
}

@inproceedings{yu2025zeroth,
  title={Zeroth-order fine-tuning of llms in random subspaces},
  author={Yu, Ziming and Zhou, Pan and Wang, Sike and Li, Jia and Tian, Mi and Huang, Hua},
  booktitle={Proceedings of the IEEE/CVF International Conference on Computer Vision},
  pages={4475--4485},
  year={2025}
}

@article{verma2023certified,
  title={Certified Zeroth-order Black-Box Defense with Robust UNet Denoiser},
  author={Verma, Astha and Subramanyam, AV and Bangar, Siddhesh and Lal, Naman and Shah, Rajiv Ratn and Satoh, Shin'ichi},
  journal={arXiv preprint arXiv:2304.06430},
  year={2023}
}

@article{wang2022zarts,
  title={Zarts: On zero-order optimization for neural architecture search},
  author={Wang, Xiaoxing and Guo, Wenxuan and Su, Jianlin and Yang, Xiaokang and Yan, Junchi},
  journal={Advances in Neural Information Processing Systems},
  volume={35},
  pages={12868--12880},
  year={2022}
}

@inproceedings{gu2021efficient,
  title={Efficient on-chip learning for optical neural networks through power-aware sparse zeroth-order optimization},
  author={Gu, Jiaqi and Feng, Chenghao and Zhao, Zheng and Ying, Zhoufeng and Chen, Ray T and Pan, David Z},
  booktitle={Proceedings of the AAAI conference on artificial intelligence},
  volume={35},
  number={9},
  pages={7583--7591},
  year={2021}
}

@article{spall2002multivariate,
  title={Multivariate stochastic approximation using a simultaneous perturbation gradient approximation},
  author={Spall, James C},
  journal={IEEE transactions on automatic control},
  volume={37},
  number={3},
  pages={332--341},
  year={2002},
  publisher={IEEE}
}

@article{brown1959statistical,
  title={Statistical forecasting for inventory control},
  author={Brown, Robert Goodell},
  journal={(No Title)},
  year={1959}
}

@article{zhang2022opt,
  title={Opt: Open pre-trained transformer language models},
  author={Zhang, Susan and Roller, Stephen and Goyal, Naman and Artetxe, Mikel and Chen, Moya and Chen, Shuohui and Dewan, Christopher and Diab, Mona and Li, Xian and Lin, Xi Victoria and others},
  journal={arXiv preprint arXiv:2205.01068},
  year={2022}
}

@article{touvron2023llama,
  title={Llama: Open and efficient foundation language models},
  author={Touvron, Hugo and Lavril, Thibaut and Izacard, Gautier and Martinet, Xavier and Lachaux, Marie-Anne and Lacroix, Timoth{\'e}e and Rozi{\`e}re, Baptiste and Goyal, Naman and Hambro, Eric and Azhar, Faisal and others},
  journal={arXiv preprint arXiv:2302.13971},
  year={2023}
}

@inproceedings{socher2013recursive,
  title={Recursive deep models for semantic compositionality over a sentiment treebank},
  author={Socher, Richard and Perelygin, Alex and Wu, Jean and Chuang, Jason and Manning, Christopher D and Ng, Andrew Y and Potts, Christopher},
  booktitle={Proceedings of the 2013 conference on empirical methods in natural language processing},
  pages={1631--1642},
  year={2013}
}

@inproceedings{dagan2005pascal,
  title={The pascal recognising textual entailment challenge},
  author={Dagan, Ido and Glickman, Oren and Magnini, Bernardo},
  booktitle={Machine learning challenges workshop},
  pages={177--190},
  year={2005},
  organization={Springer}
}

@article{clark2019boolq,
  title={Boolq: Exploring the surprising difficulty of natural yes/no questions},
  author={Clark, Christopher and Lee, Kenton and Chang, Ming-Wei and Kwiatkowski, Tom and Collins, Michael and Toutanova, Kristina},
  journal={arXiv preprint arXiv:1905.10044},
  year={2019}
}

@article{pilehvar2018wic,
  title={WiC: the word-in-context dataset for evaluating context-sensitive meaning representations},
  author={Pilehvar, Mohammad Taher and Camacho-Collados, Jose},
  journal={arXiv preprint arXiv:1808.09121},
  year={2018}
}

@article{levesque2012winograd,
  title={The Winograd schema challenge.},
  author={Levesque, Hector J and Davis, Ernest and Morgenstern, Leora},
  journal={KR},
  volume={2012},
  number={13th},
  pages={3},
  year={2012}
}

@article{hu2022lora,
  title={Lora: Low-rank adaptation of large language models.},
  author={Hu, Edward J and Shen, Yelong and Wallis, Phillip and Allen-Zhu, Zeyuan and Li, Yuanzhi and Wang, Shean and Wang, Lu and Chen, Weizhu and others},
  journal={ICLR},
  volume={1},
  number={2},
  pages={3},
  year={2022}
}

@article{gao2020making,
  title={Making pre-trained language models better few-shot learners},
  author={Gao, Tianyu and Fisch, Adam and Chen, Danqi},
  journal={arXiv preprint arXiv:2012.15723},
  year={2020}
}

@inproceedings{de2019commitmentbank,
  title={The commitmentbank: Investigating projection in naturally occurring discourse},
  author={De Marneffe, Marie-Catherine and Simons, Mandy and Tonhauser, Judith},
  booktitle={proceedings of Sinn und Bedeutung},
  volume={23},
  number={2},
  pages={107--124},
  year={2019}
}

@book{sarndal2003model,
  title={Model assisted survey sampling},
  author={S{\"a}rndal, Carl-Erik and Swensson, Bengt and Wretman, Jan},
  year={2003},
  publisher={Springer Science \& Business Media}
}

@article{shannon1948mathematical,
  title={A mathematical theory of communication},
  author={Shannon, Claude E},
  journal={The Bell system technical journal},
  volume={27},
  number={3},
  pages={379--423},
  year={1948},
  publisher={Nokia Bell Labs}
}

@article{gal2015dropout,
  title={Dropout as a Bayesian approximation},
  author={Gal, Yarin and Ghahramani, Zoubin},
  journal={arXiv preprint arXiv:1506.02157},
  year={2015}
}

@article{kirkpatrick2017overcoming,
  title={Overcoming catastrophic forgetting in neural networks},
  author={Kirkpatrick, James and Pascanu, Razvan and Rabinowitz, Neil and Veness, Joel and Desjardins, Guillaume and Rusu, Andrei A and Milan, Kieran and Quan, John and Ramalho, Tiago and Grabska-Barwinska, Agnieszka and others},
  journal={Proceedings of the national academy of sciences},
  volume={114},
  number={13},
  pages={3521--3526},
  year={2017},
  publisher={National Academy of Sciences}
}

@article{wang2019superglue,
  title={Superglue: A stickier benchmark for general-purpose language understanding systems},
  author={Wang, Alex and Pruksachatkun, Yada and Nangia, Nikita and Singh, Amanpreet and Michael, Julian and Hill, Felix and Levy, Omer and Bowman, Samuel},
  journal={Advances in neural information processing systems},
  volume={32},
  year={2019}
}

@article{rajpurkar2016squad,
  title={Squad: 100,000+ questions for machine comprehension of text},
  author={Rajpurkar, Pranav and Zhang, Jian and Lopyrev, Konstantin and Liang, Percy},
  journal={arXiv preprint arXiv:1606.05250},
  year={2016}
}

@article{dua2019drop,
  title={DROP: A reading comprehension benchmark requiring discrete reasoning over paragraphs},
  author={Dua, Dheeru and Wang, Yizhong and Dasigi, Pradeep and Stanovsky, Gabriel and Singh, Sameer and Gardner, Matt},
  journal={arXiv preprint arXiv:1903.00161},
  year={2019}
}
